\icmltitlerunning{Learning Policies for Contextual Submodular Prediction}
\newtheorem{lemma}{Lemma}
\newtheorem{corollary}{Corollary}
\newtheorem{theorem}{Theorem}
\DeclareMathOperator*{\argmax}{argmax}
\newenvironment{itemize*}
  {\begin{itemize}
   \setlength{\topsep}{0pt}
    \setlength{\itemsep}{0pt}
    \setlength{\parskip}{0pt}}
  {\end{itemize}}
\begin{document}

\twocolumn[
\icmltitle{Learning Policies for Contextual Submodular Prediction}

\icmlauthor{Stephane Ross}{stephaneross@cmu.edu}
\icmlauthor{Jiaji Zhou}{jiajiz@andrew.cmu.edu}
\icmlauthor{Yisong Yue}{yisongyue@cmu.edu}
\icmlauthor{Debadeepta Dey}{debadeep@cs.cmu.edu}
\icmlauthor{J. Andrew Bagnell}{dbagnell@ri.cmu.edu}
\icmladdress{School of Computer Science, Carnegie Mellon University, Pittsburgh, PA, USA}

\icmlkeywords{List Optimization, Submodularity, Online Learning}

\vskip 0.3in
]
\begin{abstract}

Many prediction domains, such as ad placement, recommendation, trajectory prediction, and document summarization, require predicting a \emph{set} or \emph{list} of options.
Such lists are often evaluated using submodular reward functions that measure both quality and diversity.
We propose a simple, efficient, and provably near-optimal approach to optimizing such prediction problems based on no-regret learning.
Our method leverages a surprising result from online submodular optimization: a single no-regret online learner can compete with an optimal \emph{sequence} of predictions. 
Compared to previous work, which either learn a sequence of classifiers or rely on stronger assumptions such as realizability, 
we ensure both data-efficiency as well as performance guarantees in the fully agnostic setting. Experiments validate the efficiency and applicability of the approach
on a wide range of problems including manipulator trajectory optimization, news recommendation and document summarization.

\end{abstract}





\section{Introduction}
Many problem domains, ranging from web applications such as ad placement or
content recommendation to identifying successful robotic grasp trajectories
require predicting lists of items. Such applications are often budget-limited
and the goal is to choose the best list of items, from a large set of possible items, with maximal utility.  In ad placement, we must pick a
small set of ads with high click-through rate. For robotic manipulation, we must pick a small set of initial grasp trajectories to maximize the chance of finding a successful trajectory via more extensive evaluation or simulation.\todo{JAB: I actually kind of like the old one better. LOTS OF TYPOS got introduced. I fixed the ones I saw, but
please go over carefully.}

In all of these problems, the predicted list of items should be both relevant
and diverse. For example, recommending a diverse set of news articles
increases the chance that a user would like at least one article
\cite{Radlinski2008ranked}.  As such, 
recommending multiple redundant articles on the same topic would do little to
increase this chance.  This notion
of diminishing returns due to redundancy is often captured formally using submodularity
\cite{guestrin08submodtut}. 

Exact submodular function optimization is intractable, but simple greedy selection is known to have strong near-optimal performance guarantees and typically works very well in practice \cite{guestrin08submodtut}.  Given access to the submodular reward function, one could simply employ greedy to construct good lists. 

%

In this paper, we study the general supervised learning problem of training a policy to maximize a submodular reward function.  We assume that the submodular reward function is only directly measured on a finite training set, and our goal is to learn to make good predictions on new test examples where the reward function is not directly measurable.

We develop a novel agnostic learning
approach based on new analysis showing that a \textbf{single} no-regret learner 
can produce a near-optimal \emph{list} of predictions.\footnote{This result
may seem surprising given that previous approaches \cite{streeter2007online}
require a sequence of online learners -- one for each position in the list.} We use a reduction approach to ``lift'' this result to contextual hypothesis
classes that map features to predictions, and bound performance relative to the
optimal sequence of hypotheses in the class. In contrast to previous work, our
approach ensures both data-efficiency as well as performance guarantees in the
fully agnostic setting. Moreover, our approach is simple to implement and easily
integrates with conventional off-the-shelf learning algorithms. Empirical evaluations show  
our approach to be competitive with or exceed the
state-of-the-art performance on a variety of problems, ranging from trajectory
prediction in robotics to extractive document summarization. 




\section{Related Work}


The problem of learning to optimize submodular
reward functions from data, both with and without contextual features, has become
increasingly important in machine learning due to its diverse application areas. Broadly speaking, there
are two main approaches for this setting. The first aims to identify a model
within a parametric family of submodular functions and then use the resulting model for new predictions. The second attempts to learn a strategy to directly
predict a list of elements by decomposing the overall problem into multiple simpler
learning tasks.



The first approach \cite{Yue08svmdiv,Yue11LinearSubmod,lin2012learning,raman2012online}
involves identifying the parameterization that best matches the submodular
rewards of the training instances. These methods are largely limited
to learning non-negative linear combinations of features that are themselves
submodular, which often restricts their expressiveness. 
Furthermore, while good sample complexity results are known,
these guarantees only hold under strong realizability assumptions
where submodular rewards can be modeled exactly by such linear
combinations \cite{Yue11LinearSubmod,raman2012online}. 
Recent work on \emph{Determinental Point Processes} (DPPs) \cite{kulesza2012learning} provide a 
probabilistic model of sets, which can be useful for the tasks that we consider.
These approaches, while appealing,
solve a potentially unnecessarily hard problem in first learning a holistic list evaluation model, and thus may compound errors by first approximating the
submodular function and then approximately optimizing it.


The second, a learning reduction approach, by contrast, decomposes list prediction into a sequence of simpler learning tasks that attempts to mimic the greedy strategy \cite{streeter2007online,Radlinski2008ranked,streeter2009online,deyConSeqOptRSS}. 
In \cite{deyConSeqOptRSS}, this strategy was extended to the contextual setting
by a reduction to cost-sensitive classification.
Essentially,
each learning problem aims to best predict an item to add to the list, given features,
so as to maximize the expected marginal utility. 
This approach is flexible, in that it can be used with most
common hypothesis classes and arbitrary features.
Because of this decomposition, the
full model class (all possible sequences of predictors) is often quite
expressive, and allows for agnostic learning guarantees.\footnote{This first strategy of learning the parameters of a submodular function can be seen as a special case of this second approach (see section \ref{secCSCReduction}).} This generality comes at the 
expense of being significantly less data-efficient than methods that make realizability
assumptions such as \cite{Yue11LinearSubmod,raman2012online}, as the existing
approach learns a \emph{different} classifier for each position in the list.


Compared with related work, our approach enjoys the benefits of being both
data-efficient while ensuring strong agnostic performance guarantees. We do so by developing new analysis for online submodular
optimization which yields agnostic learning guarantees while learning a 
\textbf{single} data-efficient policy.

\section{Background}

Let $\mathcal{S}$ denote the set of possible items to choose from
(\textbf{e.g.} ads, sentences, grasps). Our objective is to pick a list of
items $L \subseteq \mathcal{S}$ to maximize a reward function $f$ that obeys the following properties:\footnote{``Lists'' generalize the notion of ``set'' more commonly used in submodular optimization, and enables reasoning about item order and repeated items \cite{streeter2007online}. One may consider sets where appropriate.
}
\vspace{-0.1in}
\begin{enumerate}
\item \textbf{Monotonicity:} For any lists $L_1, L_2$, $f(L_1) \leq f(L_1 \oplus L_2)$ and $f(L_2) \leq f(L_1 \oplus L_2)$
\item \textbf{Submodularity:} For any lists $L_1,L_2$ and item $s \in \mathcal{S}$, $f(L_1 \oplus s) - f(L_1) \geq f(L_1 \oplus L_2 \oplus s) - f(L_1 \oplus L_2)$.
\end{enumerate}
\vspace{-0.1in}
Here, $\oplus$ denotes the concatenation operator. Intuitively, monotonicity
implies that adding more elements never hurts, and
submodularity captures the notion of diminishing returns (\emph{i.e.} adding an item
to a long list increases the objective less than when adding it to a shorter sublist). 
We further assume for simplicity that $f$ takes values in 
$[0,1]$, and that $f(\emptyset) = 0$ where $\emptyset$ denotes the empty
list. We will also use the shorthand $b(s|L) = f(L \oplus s) - f(L)$ to denote
the marginal benefit of adding the item $s$ to list $L$.

A simple example submodular function that repeatedly arises in many domains is
one that takes value $0$ until a suitable instance is found,
and then takes on value $1$ thereafter. Examples include 
the notion of ``multiple choice'' learning as in
\cite{deyConSeqOptRSS,guzman2012multiple} where a predicted set of options 
is considered successful if any predicted item is deemed correct,
and abandonment in ad placement \cite{Radlinski2008ranked} where success is measured by whether any predicted advertisement
is clicked on.

We consider reward functions that may depend on
some underlying state $x \in \mathcal{X}$ (e.g. a user,
environment of the robot, a document, etc.). 
Let $f_x$ denote the
reward function for state $x$, and assume that 
$f_x$ is monotone submodular for all $x$.   

\subsection{Learning Problem}
Our task consists in learning to construct good lists of pre-specified length $k$ under some unknown
distribution of states $D$ (e.g. distribution of users or documents we have to
summarize). We consider two cases: context-free and contextual.

\textbf{Context-Free.}
In the context-free case, we have no side-information about the current state (i.e. we do not observe anything about $x$).  We quantify the performance of any list $L$ by its expected value:
$$F(L) = \mathbb{E}_{x  \sim D}[f_x(L)].$$
Note that $F(L)$ is also monotone submodular. Thus the clairvoyant greedy
algorithm with perfect knowledge of $D$ can find a list $\hat{L}_k$ such that $F(\hat{L}_k) \geq
(1-1/e) F(L^*_k)$, were $L^*_k = \argmax_{L:|L|=k} F(L)$. 
Although $D$ is unknown, we assume that we observe
samples of the objective $f_x$ during training.  Our goal is thus to develop a learning approach that efficiently converges, both computationally and statistically, to the performance of the clairvoyant greedy algorithm.


\textbf{Contextual.}
In the contextual case, we observe side-information in the form of features regarding the state of the world.  We ``lift''  this problem to a hypothesis space of policies  (\emph{i.e.} multi-class predictors) that map
features to items. 

Let $\Pi$ denote our policy class, and let $\pi(x)$ denote the prediction of policy $\pi \in \Pi$ given side-information describing state $x$.
Let $L_{\pi,k} = (\pi_1,\pi_2,\dots,\pi_k)$ denote a list of policies. In state $x$, this list of policies will predict 
$L_{\pi,k}(x) = (\pi_1(x),\pi_2(x),\dots,\pi_k(x))$. We quantify performance using the expected value:
$$F(L_{\pi}) = \mathbb{E}_{x \sim  D}[f_x(L_\pi(x))].$$ 
It can be shown that $F$ obeys both monotonicity and submodularity with respect to appending policies \cite{deyConSeqOptRSS}. Thus, a clairvoyant
greedy algorithm that sequentially picks the \emph{policy} with highest expected benefit
will construct a list $\hat{L}_{\pi,k}$ such that
$F(\hat{L}_{\pi,k}) \geq (1-1/e) F(L^*_{\pi,k})$, where $L^*_{\pi,k} = \argmax_{L_{\pi}:|L_{\pi}| = k} F(L_{\pi})$.
As before, our goal is to develop a learning approach (for learning a list of policies) that \emph{efficiently} competes with the performance of the clairvoyant greedy algorithm.

\begin{algorithm}[t]
\begin{small}
\begin{algorithmic}
\STATE \textbf{Input:} Set of items $\mathcal{S}$, length $m$ of list to construct, length $k$ of best list to compete against, online learner \textsc{Predict} and \textsc{Update} functions.
\FOR{$t=1$ \textbf{to} $T$}
\STATE Call online learner \textsc{Predict}() $m$ times to construct list $L_t$. (e.g. by sampling $m$ times from online learner's internal distribution over items).
\STATE Evaluate list $L_t$ on a sampled state $x_t \sim D$.
\STATE For all $s \in \mathcal{S}$, define its discounted cumulative benefit: $r_t(s) = \sum_{i=1}^m (1-1/k)^{m-i} b(s|L_{t,i-1},x_t)$.
\STATE For all $s \in \mathcal{S}$: define loss $\ell_t(s) = \max_{s' \in \mathcal{S}} r_t(s') - r_t(s)$
\STATE Call online learner update with loss $\ell_t$: $\textsc{Update}(\ell_t)$
\ENDFOR
\end{algorithmic}
\end{small}
\caption{Submodular Contextual Policy (SCP) Algorithm in context-free setting.\label{algSCPNoContext}}
\end{algorithm}

\section{Context-free List Optimization}


%


We first consider the context-free setting. Our algorithm, called Submodular Contextual Policy (SCP), is described in Algorithm \ref{algSCPNoContext}.  
SCP
requires an online learning algorithm subroutine (denoted by \textsc{Update}) 
   that is no-regret with respect to a bounded positive loss
function,\footnote{See Section \ref{sec:context_free_theory} and \eqref{eqn:regret} for a definition of no-regret.} maintains an internal distribution over items for prediction, and can be queried for multiple predictions (i.e. multiple samples).\footnote{Algorithms that meet these requirements include Randomized Weighted Majority \cite{littlestone1994weighted}, Follow the Leader \cite{kalai2005efficient}, 
 EXP3 \cite{auer2003nonstochastic}, and many others.} 
In contrast to prior work \cite{streeter2007online}, SCP employs only a \textit{single} online
learning in the inner loop. 
 
SCP proceeds by training over a sequence of states $x_1, x_2, \dots, x_T$.
At each iteration, SCP queries the online learner to generate a list of $m$ items (via \textsc{Predict}, e.g. by sampling from its internal distribution over items), evaluates a weighted cumulative benefit of each item on the
sampled list to define a loss related to each item, and then uses the online learner (via \textsc{Update}) to update its internal distribution.  

During training, we allow the algorithm to construct lists of length $m$, rather than $k$. 
In its simplest form, one may simply choose $m=k$. However, it may be beneficial to choose $m$ differently than $k$, as is shown later in the theoretical
analysis. 

Perhaps the most unusual aspect is how loss is defined using the weighted cumulative benefits of each item: 
\begin{eqnarray}r_t(s) = \sum_{i=1}^m (1-1/k)^{m-i} b(s|L_{t,i-1},x_t),\label{eqn:weighted_benefits}\end{eqnarray}
where $L_{t,i-1}$ denotes the first $i-1$ items in $L_t$, and 
\begin{eqnarray}b(s|L_{t,i-1},x_t) = f_{x_t}(L_{t,i-1} \oplus s) - f_{x_t}(L_{t,i-1}).\label{eqn:b_context_free}\end{eqnarray}
Intuitively,
\eqref{eqn:weighted_benefits} represents the  weighted sum of benefits of item $s$ in state $x_t$ had we added it at any intermediate stage in $L_t$. 
The benefits at different positions are weighed differently, where position $i$ is adjusted by a factor $(1-1/k)^{m-i}$. These weights are derived via our theoretical analysis,
 and indicate that benefits in early positions should be more discounted than benefits in later positions. 
 Intuitively, this weighting has the effect of rebalancing the benefits so that each position contributes more equally to the overall loss.\footnote{We also consider a similar algorithm in the min-sum cover setting, where the theory also requires reweighting benefits, but instead weights earlier benefits more
highly (by a factor $m-i$, rather than $(1-1/k)^{m-i}$). We omit discussing this variant for brevity.}

SCP requires the ability to directly measure $f_{x}$ in each training instance $x_t$.  Directly measuring $f_{x_t}$ enables us to obtain loss measurements $\ell_t(s)$ for any $s \in \mathcal{S}$. 
For example, in document summarization $f_x$ corresponds to the  ROUGE score \cite{lin2004rouge}, which can be evaluated for any generated summary given expert annotations which are only available for training instances. 

In principle, SCP can also be applied in partial feedback settings, e.g. ad placement where the value $f_{x_t}$ is only observed for some items (\emph{e.g.} only the displayed ads), by using bandit learning algorithms instead (e.g. EXP3 \cite{auer2003nonstochastic}).\footnote{Partial information settings arise, e.g., when $f$ is derived using real-world trials that preclude the ability to evaluate $b(s|L,x)$ \eqref{eqn:b_context_free} for every possible  $s\in\mathcal{S}$.}  As this is an orthogonal issue, most of our focus is on the full information case.\todo{JAB: Is this footnote useful?}

\subsection{Theoretical Guarantees}
\label{sec:context_free_theory}

We now show that Algorithm \ref{algSCPNoContext} is no-regret with respect to the clairvoyant greedy algorithm's expected performance over the training instances. 
Our main theoretical result provides a reduction to an online learning problem and directly relates the performance of our algorithm on the submodular list optimization problem to the standard online learning regret incurred by the subroutine. 

Although Algorithm \ref{algSCPNoContext} uses only a \textit{single} instance of an online learner subroutine, it achieves the same performance guarantee as prior work \cite{streeter2007online,deyConSeqOptRSS} that employ $k$ separate instances of an online learner.
This leads to a surprising fact: it is possible to sample from a stationary distribution over items to construct a list that achieves the same guarantee as the clairvoyant greedy algorithm. \footnote{This fact can also be seen as a special case of a more general result proven in prior related work that analyzed randomized set selection strategies to optimize submodular functions \cite{feige2011}.} 

For a sequence of training states $\{ x_t \}_{t=1}^T$, let the sequence of loss functions $\{ \ell_t  \}_{t=1}^T$ defined in Algorithm \ref{algSCPNoContext} correspond to the sequence of losses incurred in the reduction to the online learning problem. The expected regret of the online learning algorithm is 
\begin{eqnarray}\mathbb{E}[R] = \sum_{t=1}^T \mathbb{E}_{s' \sim p_t}[\ell_t(s')] - \min_{s \in \mathcal{S}} \sum_{t=1}^T \ell_t(s),\label{eqn:regret}\end{eqnarray}
where $p_t$ is the internal distribution of the online learner used to construct list $L_t$. Note that an online learner is called \textit{no-regret} if $R$ is sublinear in $T$.

Let $F(p,m) = \mathbb{E}_{L_m \sim p}[ \mathbb{E}_{x \sim D} [ f_x(L_m) ] ]$ denote the expected value of constructing lists by sampling (with replacement) $m$ elements from distribution $p$, and let $\hat{p} = \arg\max_{t \in \{1,2,\dots,T\}} F(p_t,m)$ denote the best distribution found by the algorithm.  

We define a mixture distribution $\overline{p}$ over lists that constructs a list as follows: sample an index $t$ uniformly in $\{1,2,\dots,T\}$, then sample $m$ elements (with replacement) from $p_t$. 
Note that $F(\overline{p},m) = \frac{1}{T} \sum_{t=1}^T F(p_t,m)$ and $F(\hat{p},m) \geq F(\overline{p},m)$.  Thus it suffices to show that $F(\overline{p},m)$ has good guarantees. 
We show that in expectation $\overline{p}$ (and thus $\hat{p}$) constructs lists with performance guarantees close to the clairvoyant greedy algorithm:\footnote{Additionally, if the distributions $p_t$ converge, then the last distribution $p_{T+1}$ must have performance arbitrarily close to $\overline{p}$ as $T \rightarrow \infty$. In particular, we can expect this to occur when the examples are randomly drawn from a fixed distribution that does not change over time.}

\begin{theorem} \label{thmSCPNoContext}
Let $\alpha = \exp(-m/k)$ and $k' = \min(m,k)$. For any $\delta \in (0,1)$, with probability $\geq 1-\delta$: 
\begin{displaymath}
F(\overline{p},m) \geq (1-\alpha) F(L^*_k) - \frac{\mathbb{E}[R]}{T} - 3\sqrt{\frac{2k'\ln(2/\delta)}{T}}
\end{displaymath}
\end{theorem}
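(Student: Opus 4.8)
The plan is to reduce the submodular guarantee to the online learner's regret in a carefully chosen reward game, establish the claimed $(1-\alpha)$ factor in expectation by a telescoping/potential argument whose weights match the $(1-1/k)^{m-i}$ in \eqref{eqn:weighted_benefits}, and then upgrade to the high-probability statement with a variance-aware martingale concentration. First I would rewrite the regret. Since $\ell_t(s)=\max_{s'}r_t(s')-r_t(s)$, the $\max_{s'}r_t(s')$ terms cancel in \eqref{eqn:regret}, giving the sample-path identity $R=\max_{s}\sum_t r_t(s)-\sum_t\mathbb{E}_{s'\sim p_t}[r_t(s')]$; thus $R$ is exactly the regret of the learner in the game whose per-round reward is $r_t(\cdot)$. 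In particular $\sum_t\mathbb{E}_{s'\sim p_t}[r_t(s')]\ge \max_s\sum_t r_t(s)-R\ge \tfrac1k\sum_{s\in L^*_k}\sum_t r_t(s)-R$, where the final step uses that the best single item beats the average of the $k$ positions of $L^*_k$.

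Next comes the core step. Writing $\beta=1-1/k$ and conditioning on the history $\mathcal{H}_{t-1}$ (so that $p_t$ is fixed), set $\phi^{(t)}_i=\mathbb{E}[f_{x_t}(L_{t,i})\mid\mathcal{H}_{t-1}]$, with $\phi^{(t)}_0=0$ and $\phi^{(t)}_m=F(p_t,m)$. Two facts drive the argument: (i) a fresh draw $s'\sim p_t$ has the same expected marginal benefit on $L_{t,i-1}$ as the actual $i$-th sampled item, so $\mathbb{E}[\mathbb{E}_{s'\sim p_t}[r_t(s')]\mid\mathcal{H}_{t-1}]=\sum_{i=1}^m\beta^{m-i}(\phi^{(t)}_i-\phi^{(t)}_{i-1})$; and (ii) monotonicity plus submodularity give $\sum_{s\in L^*_k}b(s\mid L_{t,i-1},x_t)\ge f_{x_t}(L^*_k)-f_{x_t}(L_{t,i-1})$, so the optimal-list reward is at least $\tfrac1k\sum_{i=1}^m\beta^{m-i}(F(L^*_k)-\phi^{(t)}_{i-1})$ in conditional expectation. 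The weights $\beta^{m-i}$ are chosen precisely so that, with $G=F(L^*_k)$, the difference $\sum_i\beta^{m-i}(\phi_i-\phi_{i-1})-\tfrac1k\sum_i\beta^{m-i}(G-\phi_{i-1})$ telescopes exactly to $\phi_m-G\,(1-\beta^m)$, using $\sum_{i=1}^m\beta^{m-i}=k(1-\beta^m)$. Summing over $t$, combining with the regret identity, and taking expectations yields $\mathbb{E}[F(\overline p,m)]\ge(1-\beta^m)F(L^*_k)-\mathbb{E}[R]/T$, and $\beta^m=(1-1/k)^m\le e^{-m/k}=\alpha$ supplies the $(1-\alpha)$ factor.

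Finally I would upgrade to high probability. Because $F(\overline p,m)=\tfrac1T\sum_t\phi^{(t)}_m$ is a function of the data-dependent $p_t$, I would control the gap between the realized per-round rewards and their conditional means $\mathbb{E}[\,\cdot\mid\mathcal{H}_{t-1}]$ via a martingale-difference decomposition and a concentration inequality, union bounding over the few martingales involved (hence the $\ln(2/\delta)$ and the leading constant $3$). The crucial quantitative point is the range/variance bookkeeping: each $r_t(s)=\sum_i\beta^{m-i}b(\cdot)$ lies in $[0,k']$ since $\sum_{i=1}^m\beta^{m-i}=k(1-\beta^m)\le\min(m,k)=k'$, while the algorithm's discounted reward has conditional mean at most $\phi_m\le 1$ (as $f\in[0,1]$); hence its conditional variance is $O(k')$, and a Bernstein/Freedman-type bound rather than plain Azuma is what produces the $\sqrt{k'}$ scaling in $3\sqrt{2k'\ln(2/\delta)/T}$.

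I expect the concentration to be the main obstacle, not the algebra. Since the distributions $p_t$ are produced online and depend on all past (random) lists and states, the averaged quantities are not independent, so the martingale must be set up with care; and a plain range-based bound would degrade linearly in $k'$, so the variance-aware step is essential to recover the stated $\sqrt{k'}$ rate. By contrast, the telescoping identity is the clever heart of the argument but, once the weighting $\beta^{m-i}$ is guessed, reduces to a routine verification; the delicate parts are pinning the exact constant and handling the empirical-versus-expected gap for the data-dependent mixture $\overline p$.
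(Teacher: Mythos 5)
Your proposal is correct and follows essentially the same route as the paper's proof: the regret rewriting in terms of $r_t$, the bound $\frac{1}{k}\sum_{s\in L^*_k}b(s\mid L)\ge\frac{1}{k}(f(L^*_k)-f(L))$ from monotonicity plus submodularity, the telescoping recursion with weights $(1-1/k)^{m-i}$ (the paper packages this as Lemma~\ref{lemBudgetError}), the observation that a fresh draw from $p_t$ has the same conditional marginal benefit as the $i$-th sampled item, and a martingale argument with union bound for the high-probability statement. The only minor divergence is in the concentration step: the paper obtains the $\sqrt{k'}$ scaling not via a Bernstein/Freedman variance bound but by splitting each round into $m$ martingale increments with ranges $w_i=(1-1/k)^{m-i}$ and applying Azuma--Hoeffding with $\sum_{i=1}^m w_i^2\le k'$ --- both are valid.
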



\begin{corollary} 
If a no-regret algorithm is used on the sequence of loss $\ell_t$, then as $T \rightarrow \infty$, $\frac{\mathbb{E}[R]}{T} \rightarrow 0$, and:
\begin{displaymath}
\lim_{T \rightarrow \infty} F(\overline{p},m) \geq (1-\alpha) F(L^*_k) 
\end{displaymath}
\end{corollary}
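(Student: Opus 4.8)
The plan is to prove the Theorem; the Corollary then follows immediately, since a no-regret learner has $\mathbb{E}[R]/T \to 0$ and the $3\sqrt{2k'\ln(2/\delta)/T}$ term vanishes as $T\to\infty$. Writing $\beta = 1-1/k$, the strategy is a reduction: convert the single online learner's regret into the $(1-\alpha)$ greedy-style guarantee, with the weights $(1-1/k)^{m-i}$ chosen precisely so that a telescoping cancellation occurs. I would first establish the bound in expectation, then upgrade it to high probability by martingale concentration.

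First I would unwind the regret. With $p_t$ the learner's distribution, the definition $\ell_t(s)=\max_{s'}r_t(s')-r_t(s)$ gives the path-wise identity $\sum_t \mathbb{E}_{s'\sim p_t}[r_t(s')] = \max_s \sum_t r_t(s) - R$, and since the maximum over single items dominates the average over the $k$ items $s_1^*,\dots,s_k^*$ of $L^*_k$, this yields $\sum_t \mathbb{E}_{s'\sim p_t}[r_t(s')] \ge \frac1k\sum_{j=1}^k\sum_t r_t(s_j^*) - R$. Next I would pass to conditional expectations given the history $\mathcal{F}_{t-1}$ (which fixes $p_t$): the key observation is that a fresh draw $s'\sim p_t$ is distributed as the $i$-th sampled item, so $\mathbb{E}[\mathbb{E}_{s'\sim p_t}[b(s'|L_{t,i-1},x_t)]\mid\mathcal{F}_{t-1}] = \mathbb{E}[f_{x_t}(L_{t,i})-f_{x_t}(L_{t,i-1})\mid\mathcal{F}_{t-1}]$. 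Setting $h_{t,i}=\mathbb{E}[f_{x_t}(L_{t,i})\mid\mathcal{F}_{t-1}]$ (so $h_{t,0}=0$, $h_{t,m}=F(p_t,m)$), the weighted left side becomes $\sum_{i=1}^m\beta^{m-i}(h_{t,i}-h_{t,i-1})$. For the right side, monotonicity and submodularity give $\frac1k\sum_j b(s_j^*|L,x)\ge\frac1k(f_x(L^*_k)-f_x(L))$, so the weighted optimal term is at least $\sum_{i=1}^m\beta^{m-i}\frac1k(F(L^*_k)-h_{t,i-1})$, where $\frac1k\sum_{i=1}^m\beta^{m-i}=1-\beta^m\ge 1-\alpha$.

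The crux — what makes a single learner suffice — is an Abel summation on the weighted gain, which telescopes to $\sum_{i=1}^m\beta^{m-i}(h_{t,i}-h_{t,i-1}) = h_{t,m}-\frac1k\sum_{i=1}^m\beta^{m-i}h_{t,i-1}$. The subtracted term is \emph{exactly} the $\frac1k\sum_i\beta^{m-i}h_{t,i-1}$ appearing in the optimal term, so the two cancel, leaving a per-round residual of $h_{t,m}-(1-\beta^m)F(L^*_k)$. Summing the reduction inequality over $t$ then gives $\frac1T\sum_t h_{t,m}\ge(1-\alpha)F(L^*_k)-\mathbb{E}[R]/T$, i.e. the desired bound in expectation for $F(\overline{p},m)=\frac1T\sum_t h_{t,m}$. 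I expect verifying this cancellation — confirming that the geometric weights $\beta^{m-i}$ are what force the $h_{t,i-1}$ terms on both sides to match — to be the main obstacle, since it is precisely what replaces the $k$ separate learners of prior work and produces the $(1-\alpha)$ factor.

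Finally, to reach the high-probability statement I would compare the realized quantities ($r_t(s_j^*)$, $\mathbb{E}_{s'\sim p_t}[r_t(s')]$, and $f_{x_t}(L_{t,m})$) to their conditional means via martingale difference sequences, noting that each $r_t$ lies in $[0,k']$ because $\sum_{i=1}^m\beta^{m-i}=k(1-\beta^m)\le\min(m,k)=k'$. Applying Azuma--Hoeffding to these martingales, with a union bound absorbing the leading constant $3$ and the $\ln(2/\delta)$, yields the deviation term $3\sqrt{2k'\ln(2/\delta)/T}$. The delicate point here is the variance control needed to obtain the $\sqrt{k'}$ rather than $k'$ scaling: a crude application of Azuma using only the range $k'$ gives a weaker $k'$ dependence, so one must bound the predictable quadratic variation more carefully.
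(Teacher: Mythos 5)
Your proposal is correct and follows essentially the same route as the paper: the Abel-summation/telescoping cancellation you identify is exactly the unrolled recursion $\Delta_{j+1} \leq (1-1/|B|)\Delta_j + \epsilon_{j+1}$ behind Lemmas \ref{lemBudgetError} and \ref{lemStochasticRatio}, the observation that the $i$-th sampled item is a fresh draw from $p_t$ is the key step in Lemma \ref{lemStochasticRatio}, and your high-probability upgrade via Azuma--Hoeffding with quadratic-variation control (bounding $\sum_i (1-1/k)^{2(m-i)}$ by $k'$ rather than using the crude range) matches the paper's $Y_T$ and $Z_{Tm}$ martingale arguments in the proof of Theorem \ref{thmSCP}. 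The corollary then follows from the theorem exactly as you state.
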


Theorem \ref{thmSCPNoContext} provides a general approximation ratio to the best list of size $k$ when constructing a list of a different size $m$. For $m=k$, we obtain the typical $(1-1/e)$ approximation ratio \cite{guestrin08submodtut}.  As $m$ increases, this provides approximation ratios that converge exponentially closer to 1. 


Naively, one might expect regret $\mathbb{E}[R]/T$ to scale linearly in $k'$ as it involves loss in $[0,k']$. However, we show that regret  actually scales as $O(\sqrt{k'}$) (e.g. using Weighted Majority \cite{kalai2005efficient}).
Our result matches the best known results for this setting \cite{streeter2007online} while using a \textit{single} online learner, and is especially beneficial in the contextual setting due to improved generalization (see Section \ref{sec:contextual}).

\vspace{0.05in}
\begin{corollary} \label{corWM}
Using weighted majority with the optimal learning rate  guarantees with probability $\geq 1-\delta$:
\begin{small}
\begin{displaymath}
F(\overline{p},m) \geq (1-\alpha) F(L^*_k) - O\left(\sqrt{\frac{k'\log(1/\delta)}{T}} + \sqrt{\frac{k' \log|\mathcal{S}|}{T}}\right).
\end{displaymath}
\end{small}
\end{corollary}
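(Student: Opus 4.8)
The plan is to obtain Corollary~\ref{corWM} purely as a specialization of Theorem~\ref{thmSCPNoContext}: the concentration term $3\sqrt{2k'\ln(2/\delta)/T}$ already supplies the $O(\sqrt{k'\log(1/\delta)/T})$ summand verbatim, so the only thing I must produce is a regret bound for weighted majority of the form $\mathbb{E}[R]=O(\sqrt{k'\,T\log|\mathcal{S}|})$, which upon dividing by $T$ yields the remaining $O(\sqrt{k'\log|\mathcal{S}|/T})$ summand. Here the experts of the online learner are exactly the items $s\in\mathcal{S}$, so the ``$\ln N$'' of weighted majority is $\ln|\mathcal{S}|$.

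First I would pin down the range of the losses. Each marginal benefit $b(s\mid L_{t,i-1},x_t)$ lies in $[0,1]$ because $f_{x_t}$ is monotone with values in $[0,1]$, and the weights form a geometric series, $\sum_{i=1}^m (1-1/k)^{m-i}=k\bigl(1-(1-1/k)^m\bigr)\le\min(m,k)=k'$. Hence $r_t(s)\in[0,k']$ and $\ell_t(s)=\max_{s'}r_t(s')-r_t(s)\in[0,k']$, which is the source of the ``naive'' linear-in-$k'$ scaling. I would also note that feeding the learner the losses $\ell_t$ is equivalent to feeding it the gains $r_t$: the additive per-round shift $\max_{s'}r_t(s')$ cancels in the multiplicative weight normalization, so the two produce identical distributions $p_t$ and identical regret $R=\max_s\sum_t r_t(s)-\sum_t\langle p_t,r_t\rangle$. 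This lets me analyze the gain version.

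The crux is to beat the naive bound by a factor $\sqrt{k'}$. I would start from the standard second-order (variance) bound for exponential weights with learning rate $\eta$, namely $R\le\frac{\ln|\mathcal{S}|}{\eta}+\eta\sum_{t}\sum_{s}p_t(s)r_t(s)^2$, valid whenever $\eta k'\le 1$. Using the self-bounding inequality $r_t(s)^2\le k'\,r_t(s)$ collapses the variance term into the first-order gain, giving $R\le\frac{\ln|\mathcal{S}|}{\eta}+\eta k'\,\hat G$ with $\hat G=\sum_t\langle p_t,r_t\rangle$. The key observation, which replaces the crude $\hat G\le k'T$, is that in expectation the cumulative gain is only $O(T)$: taking expectations over the sampled list and state, a weighted telescoping gives
\begin{displaymath}
\mathbb{E}\bigl[\langle p_t,r_t\rangle\bigr]=\sum_{i=1}^m (1-1/k)^{m-i}\,\Delta_{t,i}\le\sum_{i=1}^m \Delta_{t,i}=\mathbb{E}[f_{x_t}(L_t)]\le 1,
\end{displaymath}
where $\Delta_{t,i}=\mathbb{E}[f_{x_t}(L_{t,i})-f_{x_t}(L_{t,i-1})]\ge 0$ because a fresh $p_t$-sample is distributed exactly as the $i$-th item of $L_t$, and $(1-1/k)^{m-i}\le 1$. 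Summing over $t$ yields $\mathbb{E}[\hat G]\le T$. Taking expectations in the pathwise regret bound and optimizing $\eta=\sqrt{\ln|\mathcal{S}|/(k'T)}$ then gives $\mathbb{E}[R]\le 2\sqrt{k'\,T\ln|\mathcal{S}|}$, as required.

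I expect the main obstacle to be precisely this expected-gain bound: recognizing that the discounting weights satisfy $(1-1/k)^{m-i}\le 1$ and that the monotone marginal gains telescope to $f_{x_t}(L_t)\le 1$ is what caps $\mathbb{E}[\hat G]$ at $T$ rather than $k'T$, and it is this cap, combined with self-bounding, that removes one factor of $\sqrt{k'}$ from the regret. A secondary technical point is the feasibility of the chosen learning rate: $\eta=\sqrt{\ln|\mathcal{S}|/(k'T)}$ satisfies $\eta k'\le 1$ only once $T\gtrsim k'\ln|\mathcal{S}|$, so for small $T$ I would cap $\eta$ at $1/k'$, which only affects lower-order terms absorbed by the $O(\cdot)$.
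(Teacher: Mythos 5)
Your proposal is correct in its essentials and captures the one idea that makes the corollary non-trivial --- that the cumulative (weighted) gain telescopes to $O(T)$ rather than $O(k'T)$ because $\sum_{i=1}^m b(s_i|L_{t,i-1},x_t) = f_{x_t}(L_t) \le 1$ --- but you reach the $\sqrt{k'}$ savings by a genuinely different route than the paper. The paper follows Streeter and Golovin: it applies a \emph{first-order} (``small-gain'') bound for weighted majority, $\mathbb{E}[R] \le 2Z\sqrt{k'\ln|\tilde\Pi|}$, where $Z^2$ is the \emph{best expert's} cumulative weighted gain, bounds $Z^2 \le T + R$ via the pathwise telescoping of the algorithm's own realized gain, and then solves the resulting quadratic inequality in $Z$. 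You instead use the \emph{second-order} (variance) bound, apply the self-bounding inequality $r_t(s)^2 \le k' r_t(s)$ to collapse the variance term into the \emph{algorithm's own} expected gain $\hat G = \sum_t \langle p_t, r_t\rangle$, and bound $\mathbb{E}[\hat G] \le T$ by the same telescoping. Your route avoids the quadratic entirely and yields a learning rate $\eta = \sqrt{\ln|\mathcal{S}|/(k'T)}$ that depends only on $T$ and $k'$, whereas the paper's tuning depends on the best expert's gain in hindsight and so nominally needs a doubling trick; the paper's route, on the other hand, works directly with realized quantities, which matters for the point below.

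The one genuine gap is a conflation of expectations: Theorem \ref{thmSCPNoContext} holds with high probability with $\mathbb{E}[R]$ as defined in \eqref{eqn:regret}, which is a \emph{random variable} (the losses $\ell_t$ depend on the realized lists $L_t$ and states $x_t$), whereas your bound $\mathbb{E}[\hat G]\le T$ controls $\hat G$ only after a further expectation over the list construction. ``Taking expectations in the pathwise regret bound'' therefore bounds $\mathbb{E}[\mathbb{E}[R]]$, which cannot be substituted into a statement that must hold with probability $1-\delta$. The fix is exactly the martingale already used in the proof of Theorem \ref{thmSCP}: the differences $w_i\bigl(\mathbb{E}_{s\sim p_t}[b(s|L_{t,i-1},x_t)] - b(s_i|L_{t,i-1},x_t)\bigr)$ form a martingale difference sequence, so Azuma-Hoeffding gives $\hat G \le T + \sqrt{2Tk'\ln(1/\delta')}$ on the same event already conditioned on in the theorem; plugging this into $\mathbb{E}[R]\le \ln|\mathcal{S}|/\eta + \eta k'\hat G$ only adds lower-order terms absorbed by the $O(\cdot)$. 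With that patch your argument goes through and matches the paper's conclusion.
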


\section{Contextual List Optimization with Stationary Policies}
\label{sec:contextual}

We now consider the contextual setting where features of each state $x_t$ are observed before choosing the list. As mentioned, our goal here is to compete with the best list of policies $(\pi_1,\pi_2,\dots,\pi_k)$ from a hypothesis class $\Pi$. Each of these policies are assumed to choose an item solely based on features of the state $x_t$. 

We consider embedding $\Pi$ within a larger class, $\Pi \subseteq \tilde{\Pi}$, where policies $\tilde{\Pi}$ are functions of both state and a partially chosen list. Then for any $\pi \in \tilde{\Pi}$, $\pi(x,L)$ corresponds to the item that policy $\pi$ selects to append to list $L$ given state $x$. We will learn a policy, or distribution of policies, from $\tilde{\Pi}$ that attempts to generalize list construction across multiple positions.\footnote{Competing against the best list of policies in $\tilde{\Pi}$ is difficult in general as it violates submodularity: policies can perform better when added later in the list (due to list features). Nevertheless, we can still learn from class $\tilde{\Pi}$ and compete against the best list of policies in $\Pi$.}

\begin{algorithm}[t]
\begin{small}
\begin{algorithmic}
\STATE \textbf{Input:} Set of items $\mathcal{S}$, policy class $\tilde{\Pi}$, length $m$ of list we construct, length $k$ of best list we compete against.
\STATE Pick initial policy $\pi_1$ (or distribution over policies)
\FOR{$t=1$ \textbf{to} $T$}
\STATE Observe features of a sampled state $x_t \sim D$ (e.g. features of user/document)
\STATE Construct list $L_t$ of $m$ items using $\pi_{t}$ with features of $x_t$ (or by sampling a policy for each position if $\pi_t$ is a distribution over policies).
\STATE Define $m$ new cost-sensitive classification examples $\{(v_{ti},c_{ti},w_{ti})\}_{i=1}^m$ where:  
\begin{enumerate}
\item $v_{ti}$ is the feature vector of state $x_{t}$ and list $L_{t,i-1}$
\item $c_{ti}$ is the cost vector such that $\forall s \in \mathcal{S}$: $c_{ti}(s) =  \max_{s' \in \mathcal{S}} b(s'|L_{t,i-1},x_t) - b(s|L_{t,i-1},x_t)$
\item $w_{ti} = (1-1/k)^{m-i}$ is the weight of this example
\end{enumerate}
\STATE $\pi_{t+1} = \textsc{Update}(\pi_t, \{(v_{ti},c_{ti},w_{ti})\}_{i=1}^m)$
\ENDFOR
\STATE \textbf{return} $\pi_{T+1}$
\end{algorithmic}
\end{small}
\caption{Submodular Contextual Policy (SCP) Algorithm. \label{algSCP}}
\end{algorithm}



We present an extension of SCP to the contextual setting (Algorithm \ref{algSCP}). At each iteration, SCP constructs a list $L_t$ for the state $x_t$ (using its current policy or by sampling policies from its distribution over policies). 
   
Analogous to the context-free setting, we define a loss function for the learner subroutine (\textsc{Update}).  
We represent the loss using weighted cost-sensitive classification examples $\{(v_{ti},c_{ti},w_{ti})\}_{i=1}^m$, where $v_{ti}$ denotes features of the state $x_t$ and list $L_{t,i-1}$, $w_{ti}=(1-1/k)^{m-i}$ is the weight associated to this example, and $c_{ti}$ is the cost vector specifying the cost of each item $s \in \mathcal{S}$
\begin{eqnarray}c_{ti}(s) = \max_{s' \in \mathcal{S}} b(s'|L_{t,i-1},x_t)- b(s|L_{t,i-1},x_t).\label{eqn:cost_contextual}\end{eqnarray}
The loss incurred by any policy $\pi$ is defined by its loss on this set of cost-sensitive classification examples, i.e. 
$$\ell_t(\pi) = \sum_{i=1}^m w_{ti} c_{ti}(\pi(v_{ti})).$$ 
These new examples are then used to update the policy (or distribution over policies) using a no-regret algorithm (\textsc{Update}). This reduction effectively transforms the task of learning a policy for this submodular list optimization problem into a standard online cost-sensitive classification problem.\footnote{This is similar to DAgger \cite{Ross11,Ross11b,Ross12} developed for sequential prediction problems like imitation learning. Our work can be seen as a specialization of DAgger for submodular list optimization, and ensures that we learn policies that pick good items under the lists they construct. Unlike prior work, our analysis leverages submodularity, leading to several modifications, and improved global optimality guarantees.}  Analogous to the context-free setting, we can also extend to partial feedback settings where $f$ is only partially measurable by using contextual bandit algorithms such as EXP4 \cite{auer2003nonstochastic} as the online learner (\textsc{Update}).\footnote{Analogous to the context-free setting, partial information arises when $c_{ti}$ \eqref{eqn:cost_contextual} is not measurable for every $s\in\mathcal{S}$.}\todo{SR: Added last sentence here to say we can handle partial feedback in contextual case with SCP, and tweaked the footnote so it still fits in 8 pages}


\subsection{No-Regret Cost-Sensitive Classification} \label{secCSCReduction}

Having transformed our problem into online cost-sensitive classification, we now present approaches that can be used to achieve no-regret on such tasks. 
For finite policy classes $\tilde{\Pi}$, one can again leverage any no-regret online algorithm such as Weighted Majority \cite{kalai2005efficient}. Weighted Majority maintains a distribution over policies in $\tilde{\Pi}$ based on the loss $\ell_t(\pi)$ of each $\pi$, and achieves regret at a rate of 
$$R=\sqrt{k'\log|\tilde{\Pi}|/T},$$ 
for $k'=\min(m,k)$. In fact, the context-free setting can be seen as a special case, where 
$\Pi = \tilde{\Pi} = \{ \pi_s | s \in \mathcal{S} \}$ and $\pi_s(v) = s$ for any $v$.

However, achieving no-regret for infinite policy classes is in general not tractable. A more practical approach is to employ existing reductions of cost-sensitive classification problems to convex optimization problems, for which we can efficiently run no-regret convex optimization (e.g. gradient descent). These reductions effectively upper bound the cost-sensitive loss by a convex loss, and thus bound the original loss of the list prediction problem. We briefly describe two such reductions from \cite{beygelzimer2005error}:
\todo{SR: Cited beygelzimmer in last sentence so we don't claim these as new reductions, tweaked slightly the below so it would still fit in 8 pages.}


\paragraph{Reduction to Regression}
We transform cost-sensitive classification into a regression problem of predicting the costs of each item $s \in \mathcal{S}$. Afterwards, the policy chooses the item with lowest predicted cost. We convert each weighted cost-sensitive example $(v,c,w)$ into $|\mathcal{S}|$ weighted regression examples. 

For example, if we use least-squares linear regression, the weighted squared loss for a particular example $(v,c,w)$ and policy $h$ would be: 
$$\ell(h) = w \sum_{s \in \mathcal{S}} (h^\top v(s) - c(s))^2.$$ 

\paragraph{Reduction to Ranking}
Another useful reduction transforms the problem into a ''ranking'' problem that penalizes ranking an item $s$ above another better item $s'$.  In our experiments, we employ a weighted hinge loss, and so the penalty is proportional to the difference in cost of the misranked pair. 
   For each cost-sensitive example $(v,c,w)$, we generate $\mathcal{|S|}(\mathcal{|S|} -1)/2$ ranking examples for every distinct pair of items $(s,s')$, where we must predict the best item among $(s, s')$ (potentially by a margin), with a weight of $w|c(s)-c(s')|$. 

   For example, if we train a linear SVM \cite{joachims2005support}, we obtain a weighted hinge loss of the form: 
 $$w |\delta_{s,s'}| \max( 0, 1 - h^\top (v(s)-v(s')) \textrm{sign}(\delta_{s,s'}) ),$$ 
 where $\delta_{s,s'} = c(s) - c(s')$ and $h$ is the linear policy.
At prediction time, we simply predict the item $s^*$ with highest score, $s^* = \argmax_{s \in \mathcal{S}} h^\top v(s)$. 
This reduction proves advantageous whenever it is easier to predict pairwise rankings rather than the actual cost. 

\subsection{Theoretical Guarantees}


We now present contextual performance guarantees for SCP that relate performance on the submodular list optimization task to the regret of the corresponding online cost-sensitive classification task.
Let  $\ell_t : \tilde{\Pi} \rightarrow \mathbb{R}$ compute the loss of each policy $\pi$ on the cost-sensitive classification examples $\{v_{ti},c_{ti},w_{ti}\}_{i=1}^m$ collected in Algorithm \ref{algSCP} for state $x_t$.
We use  $\{ \ell_t  \}_{t=1}^T$ as the sequence of losses for the online learning problem. 

For a deterministic online algorithm that picks the sequence of policies $\{\pi_t\}_{t=1}^T$, the regret is
$$R = \sum_{t=1}^T \ell_t(\pi_t) - \min_{\pi \in \tilde{\Pi}} \sum_{t=1}^T \ell_t(\pi).$$
For a randomized online learner, let $\pi_t$ be the distribution over policies at iteration $t$, with expected regret
$$\mathbb{E}[R] = \sum_{t=1}^T \mathbb{E}_{\pi'_t \sim \pi_t}[\ell_t(\pi'_t)] - \min_{\pi \in \tilde{\Pi}} \sum_{t=1}^T \ell_t(\pi).$$
Let $F(\pi,m) = \mathbb{E}_{L_{\pi,m} \sim \pi}[ \mathbb{E}_{x \sim D} [ f_x(L_{\pi,m}(x)) ] ]$ denote the expected value of constructing lists by sampling (with replacement) $m$ policies from distribution $\pi$ (if $\pi$ is a deterministic policy, then this means we use the same policy at each position in the list). Let $\hat{\pi} = \argmax_{t \in \{1,2,\dots,T\}} F(\pi_t,m)$ denote the best distribution found by the algorithm in hindsight. 

We use a mixture distribution $\overline{\pi}$ over policies to construct a list as follows: sample an index $t$ uniformly in $\{1,2,\dots,T\}$, then sample $m$ policies from $\pi_t$ to construct the list.  As before, we note that $F(\overline{\pi},m) = \frac{1}{T} \sum_{t=1}^T F(\pi_t,m)$, and $F(\hat{\pi},m) \geq F(\overline{\pi},m)$. As such, we again focus on proving good guarantees for $F(\overline{\pi},m)$, as shown by the following theorem.
\begin{theorem}\label{thmSCP}
Let $\alpha = \exp(-m/k)$, $k' = \min(m,k)$ and pick any $\delta \in (0,1)$. After $T$ iterations, for deterministic online algorithms, we have that with probability at least $1-\delta$:
\begin{displaymath}
F(\overline{\pi},m) \geq (1-\alpha) F(L^*_{\pi,k}) - \frac{R}{T} - 2 \sqrt{\frac{2 \ln(1/\delta)}{T}}.
\end{displaymath}
Similarly, for randomized online algorithms, with probability at least $1-\delta$:
\begin{displaymath}
F(\overline{\pi},m) \geq (1-\alpha) F(L^*_{\pi,k}) - \frac{\mathbb{E}[R]}{T}  - 3\sqrt{\frac{2k'\ln(2/\delta)}{T}}.
\end{displaymath}
\end{theorem}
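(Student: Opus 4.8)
The plan is to mirror the analysis behind Theorem~\ref{thmSCPNoContext}, replacing ``best fixed item'' by ``best fixed policy in $\tilde\Pi$'' and the optimal list $L^*_k$ by the optimal list of policies $L^*_{\pi,k}$. Write $c := 1-1/k$ and define the (weighted, realized) reward of a policy $\pi$ at round $t$ by $r_t(\pi) = \sum_{i=1}^m c^{\,m-i}\, b(\pi(x_t,L_{t,i-1})\mid L_{t,i-1},x_t)$. Since $\ell_t(\pi)=\sum_i w_{ti}c_{ti}(\pi(v_{ti})) = M_t - r_t(\pi)$ with $M_t := \sum_i c^{\,m-i}\max_{s'} b(s'\mid L_{t,i-1},x_t)$ independent of $\pi$, minimizing cumulative $\ell_t$ is the same as maximizing cumulative $r_t$. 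Hence the definition of regret gives, in the deterministic case, $\sum_t r_t(\pi_t) \ge \max_{\pi\in\tilde\Pi}\sum_t r_t(\pi) - R$, and in the randomized case the same with $r_t(\pi_t)$ replaced by $\mathbb{E}_{\pi'_t\sim\pi_t}[r_t(\pi'_t)]$ and $R$ by $\mathbb{E}[R]$.

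Next I would lower-bound the comparator. Because $\Pi\subseteq\tilde\Pi$, the optimal policies $\pi^*_1,\dots,\pi^*_k$ of $L^*_{\pi,k}$ are admissible comparators, and since they ignore the partial list, $\pi^*_j(x_t,L_{t,i-1})=\pi^*_j(x_t)$. Averaging the comparator over these $k$ policies and invoking monotone submodularity of each $f_{x_t}$ (the same greedy inequality used for $L^*_k$, now applied to the items $\pi^*_j(x_t)$ against the realized prefix $L_{t,i-1}$) yields $\sum_{j=1}^k b(\pi^*_j(x_t)\mid L_{t,i-1},x_t) \ge f_{x_t}(L^*_{\pi,k}(x_t)) - f_{x_t}(L_{t,i-1})$. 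Weighting by $c^{\,m-i}$ and summing gives $\max_\pi\sum_t r_t(\pi) \ge \tfrac1k\sum_t\sum_i c^{\,m-i}\big[f_{x_t}(L^*_{\pi,k}(x_t)) - f_{x_t}(L_{t,i-1})\big]$.

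The algebraic heart is a telescoping identity. The realized reward telescopes exactly, $r_t(\pi_t)=\sum_i c^{\,m-i}\big(f_{x_t}(L_{t,i})-f_{x_t}(L_{t,i-1})\big)$, regardless of how the items were chosen. Combining this with the comparator bound, substituting $1/k=1-c$ and $\sum_{i=1}^m c^{\,m-i}=k(1-c^m)$, and collecting the $f_{x_t}(L_{t,i-1})$ terms by Abel summation collapses everything (using $f_{x_t}(\emptyset)=0$) to the empirical bound $\sum_t f_{x_t}(L_{t,m}) \ge (1-c^m)\sum_t f_{x_t}(L^*_{\pi,k}(x_t)) - R$. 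Since $c^m=(1-1/k)^m\le e^{-m/k}=\alpha$, this is exactly the deterministic statement at the level of empirical averages over the observed $x_t$.

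Finally I would pass from empirical averages to the stated expectations by concentration. Conditioned on the history $\mathcal F_{t-1}$, $\mathbb E[f_{x_t}(L_{t,m})\mid\mathcal F_{t-1}]=F(\pi_t,m)$ while $f_{x_t}(L^*_{\pi,k}(x_t))$ is i.i.d.; both lie in $[0,1]$, so Azuma--Hoeffding and Hoeffding control $\tfrac1T\sum_t f_{x_t}(L_{t,m})$ against $F(\overline\pi,m)=\tfrac1T\sum_t F(\pi_t,m)$ and $\tfrac1T\sum_t f_{x_t}(L^*_{\pi,k}(x_t))$ against $F(L^*_{\pi,k})$, each at rate $\sqrt{2\ln(1/\delta)/T}$; a union bound over the two events gives the deterministic $2\sqrt{2\ln(1/\delta)/T}$. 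For randomized learners there is one extra gap, between the realized weighted benefits $r_t(\pi_t)$ and the conditional expectation $\mathbb E_{\pi'_t\sim\pi_t}[r_t(\pi'_t)]$ that the regret actually controls (these agree in expectation because $\ell_t$ is linear in the per-position action, which is precisely what lets a \emph{single} learner suffice). I expect this to be the main obstacle: one must set up the martingale over both the per-position policy samples and the rounds $t$, and obtain the advertised $\sqrt{k'}$ rather than $k'$ dependence. This follows because the increments carry weights $c^{\,m-i}$ with $\sum_{i=1}^m (c^{\,m-i})^2 \le \sum_{i=1}^m c^{\,m-i} = k(1-c^m)\le k'$, so a weighted Azuma/Freedman bound deviates at rate $\sqrt{k'\ln(1/\delta)/T}$; union-bounding this third event with the previous two yields the randomized $3\sqrt{2k'\ln(2/\delta)/T}$.
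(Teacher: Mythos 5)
Your proposal is correct and follows essentially the same route as the paper's proof: the Abel-summation collapse you describe is exactly the unrolled recurrence of Lemma~\ref{lemBudgetError} with the comparator averaged over the $k$ policies of $L^*_{\pi,k}$ (via Lemma~\ref{lemAddList}), and your martingale arguments --- including the weighted squared increments $\sum_{i=1}^m (1-1/k)^{2(m-i)} \leq k'$ that yield the $\sqrt{k'}$ rate for randomized learners --- match the paper's $Y_T$ and $Z_{Tm}$ constructions. The only (immaterial) difference is that you union-bound two separate concentration events in the deterministic case, which gives $2\sqrt{2\ln(2/\delta)/T}$, whereas the paper folds both into a single martingale with increments in $[-2,2]$ to obtain $2\sqrt{2\ln(1/\delta)/T}$ directly.
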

Thus, as in the previous section, a no-regret algorithm must achieve $F(\overline{\pi},m) \geq (1-\alpha) F(L^*_{\pi,k})$ with high probability as $T \rightarrow \infty$. This matches similar guarantees provided in \cite{deyConSeqOptRSS}. 
   Despite having similar guarantees, we intuitively expect SCP to outperform \cite{deyConSeqOptRSS} in practice because SCP can use all data to train a \textit{single} predictor, instead of being split to train $k$ separate ones. We empirically verify this intuition in Section \ref{sec:experiments}.

%


When using surrogate convex loss functions (such as regression or ranking loss),
we provide a general result that applies if the online learner uses any convex upper bound of the cost-sensitive loss. An extra penalty term is introduced that relates the gap between the convex upper bound and the original cost-sensitive loss:
\begin{corollary}\label{corSCP}
Let $\alpha = \exp(-m/k)$ and $k' = \min(m,k)$. If we run an online learning algorithm on the sequence of convex loss $C_t$ instead of $\ell_t$, then after $T$ iterations, for any $\delta \in (0,1)$, we have that with probability at least $1-\delta$:
\begin{displaymath}
F(\overline{\pi},m) \geq (1-\alpha) F(L^*_{\pi,k}) - \frac{\tilde{R}}{T} - 2 \sqrt{\frac{2 \ln(1/\delta)}{T}} - \mathcal{G}
\end{displaymath}
where $\tilde{R}$ is the regret on the sequence of convex loss $C_t$, and  $\mathcal{G}$ is defined as 
   \begin{small}
 \begin{align*}
&\frac{1}{T}\left[\sum_{t=1}^T (\ell_t(\pi_t) - C_t(\pi_t)) + \min_{\pi \in \tilde{\Pi}} \sum_{t=1}^T C_t(\pi) - \min_{\pi' \in \tilde{\Pi}} \sum_{t=1}^T \ell_t(\pi')\right]
\end{align*}
\end{small}
\hspace{-0.06in}and denotes the  ``convex optimization gap'' that measures how close the surrogate $C_t$ is to minimizing $\ell_t$.

\end{corollary}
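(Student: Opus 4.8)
The plan is to reduce Corollary~\ref{corSCP} directly to Theorem~\ref{thmSCP}, accounting for the discrepancy between optimizing the convex surrogate $C_t$ and optimizing the true cost-sensitive loss $\ell_t$. The crucial observation is that Theorem~\ref{thmSCP} is a statement about an \emph{arbitrary} sequence of policies $\{\pi_t\}_{t=1}^T$: its guarantee depends only on the losses $\ell_t(\pi_t)$ actually realized by the produced policies (which determine the lists $L_t$, and hence the reduction losses) together with a concentration term, and never on how those policies were chosen. Hence the deterministic bound $F(\overline{\pi},m) \geq (1-\alpha)F(L^*_{\pi,k}) - R/T - 2\sqrt{2\ln(1/\delta)/T}$ continues to hold for the sequence $\{\pi_t\}$ generated by running the no-regret learner on $C_t$, where $R = \sum_{t=1}^T \ell_t(\pi_t) - \min_{\pi \in \tilde{\Pi}} \sum_{t=1}^T \ell_t(\pi)$ is the regret measured on the \emph{true} losses for this sequence.

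First I would invoke Theorem~\ref{thmSCP} for this sequence. The remaining work is purely algebraic: rewrite $R$ in terms of the convex regret $\tilde{R} = \sum_{t=1}^T C_t(\pi_t) - \min_{\pi \in \tilde{\Pi}} \sum_{t=1}^T C_t(\pi)$. Subtracting the two definitions gives
\begin{displaymath}
R - \tilde{R} = \sum_{t=1}^T \bigl(\ell_t(\pi_t) - C_t(\pi_t)\bigr) + \min_{\pi \in \tilde{\Pi}} \sum_{t=1}^T C_t(\pi) - \min_{\pi' \in \tilde{\Pi}} \sum_{t=1}^T \ell_t(\pi'),
\end{displaymath}
which is exactly $T\mathcal{G}$ by the definition of the convex optimization gap. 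Thus $R/T = \tilde{R}/T + \mathcal{G}$.

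Finally I would substitute $R/T = \tilde{R}/T + \mathcal{G}$ into the bound from Theorem~\ref{thmSCP}, yielding
\begin{displaymath}
F(\overline{\pi},m) \geq (1-\alpha)F(L^*_{\pi,k}) - \frac{\tilde{R}}{T} - \mathcal{G} - 2\sqrt{\frac{2\ln(1/\delta)}{T}},
\end{displaymath}
which is the claimed inequality. There is no real analytic obstacle here; the entire content is the decomposition $R = \tilde{R} + T\mathcal{G}$, so the one point I would take care to justify rigorously is that Theorem~\ref{thmSCP} may legitimately be applied to the policy sequence produced while optimizing a surrogate—i.e. that its guarantee is agnostic to the optimization target and depends only on the realized cost-sensitive losses. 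Once that is granted, the surrogate simply contributes the extra additive term $\mathcal{G}$ measuring how far minimizing $C_t$ is from minimizing $\ell_t$; an entirely analogous argument handles the randomized-learner version with the corresponding $3\sqrt{2k'\ln(2/\delta)/T}$ penalty.
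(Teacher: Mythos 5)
Your proposal is correct and follows essentially the same route as the paper, which proves the corollary in one line by noting $\mathcal{G} = (R - \tilde{R})/T$ and substituting into Theorem~\ref{thmSCP}. Your additional care in noting that Theorem~\ref{thmSCP} applies to any policy sequence regardless of how it was generated is a reasonable explication of what the paper leaves implicit, but it is the same argument.
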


This result implies that using a good surrogate convex loss for no-regret convex optimization will lead to a policy that has a good performance relative to the optimal list of policies.
Note that the gap $\mathcal{G}$ often may be small or non-existent. For instance, in the case of the reduction to regression or ranking, $\mathcal{G} = 0$ in realizable settings where there exists a ``perfect'' predictor in the class.  
 Similarly, in cases where the problem is near-realizable we would expect $\mathcal{G}$ to be small.\footnote{We conjecture that this gap term $\mathcal{G}$ is not specific to our particular scenario, but rather is (implicitly) always present whenever one attempts to optimize classification accuracy via surrogate convex optimization.}



\section{Experimental Results}
\label{sec:experiments}

\begin{figure*}
\centering
\subfigure[]{\includegraphics[width=0.3\textwidth]{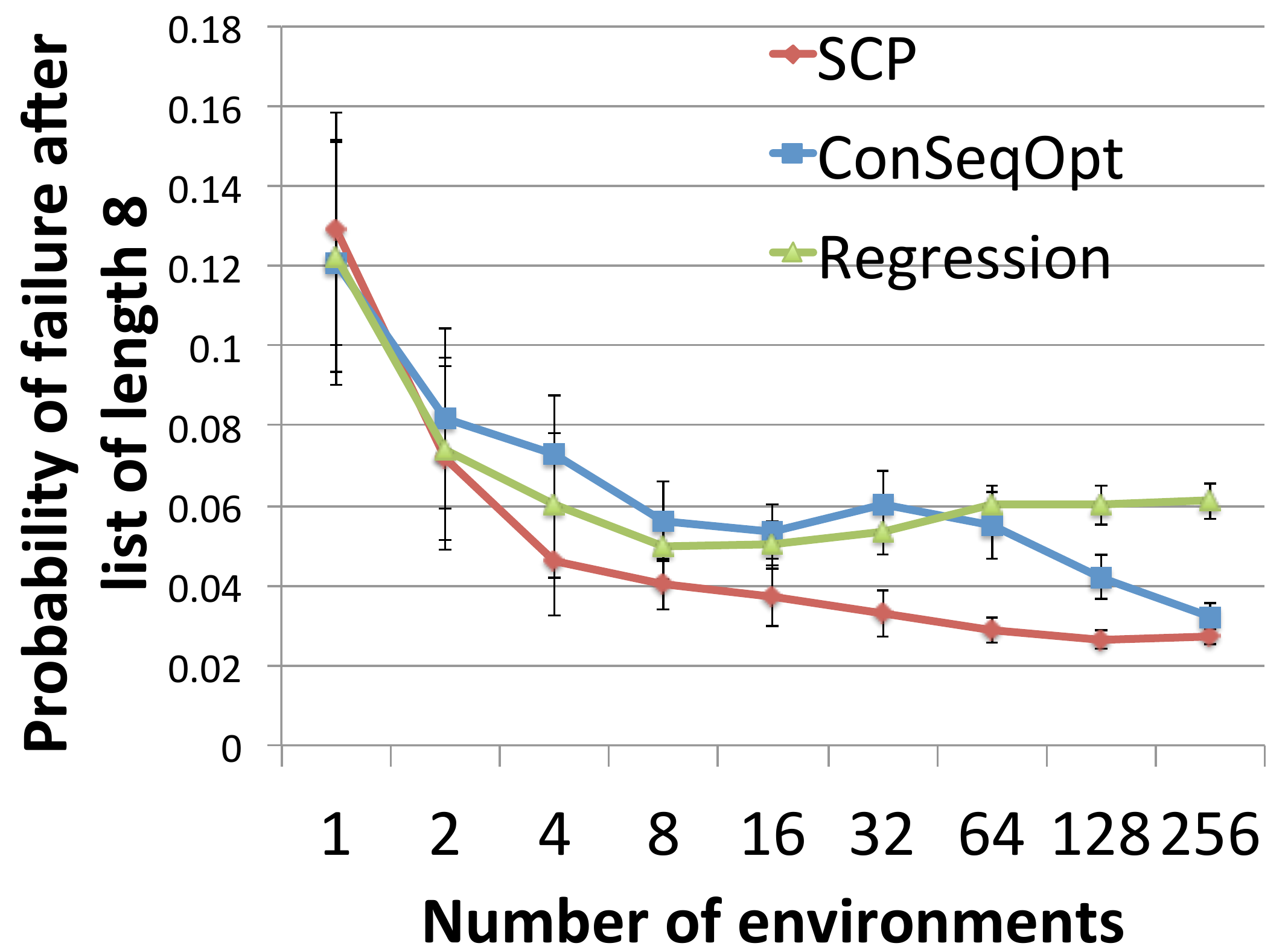}}\quad
\subfigure[]{\includegraphics[width=0.3\textwidth]{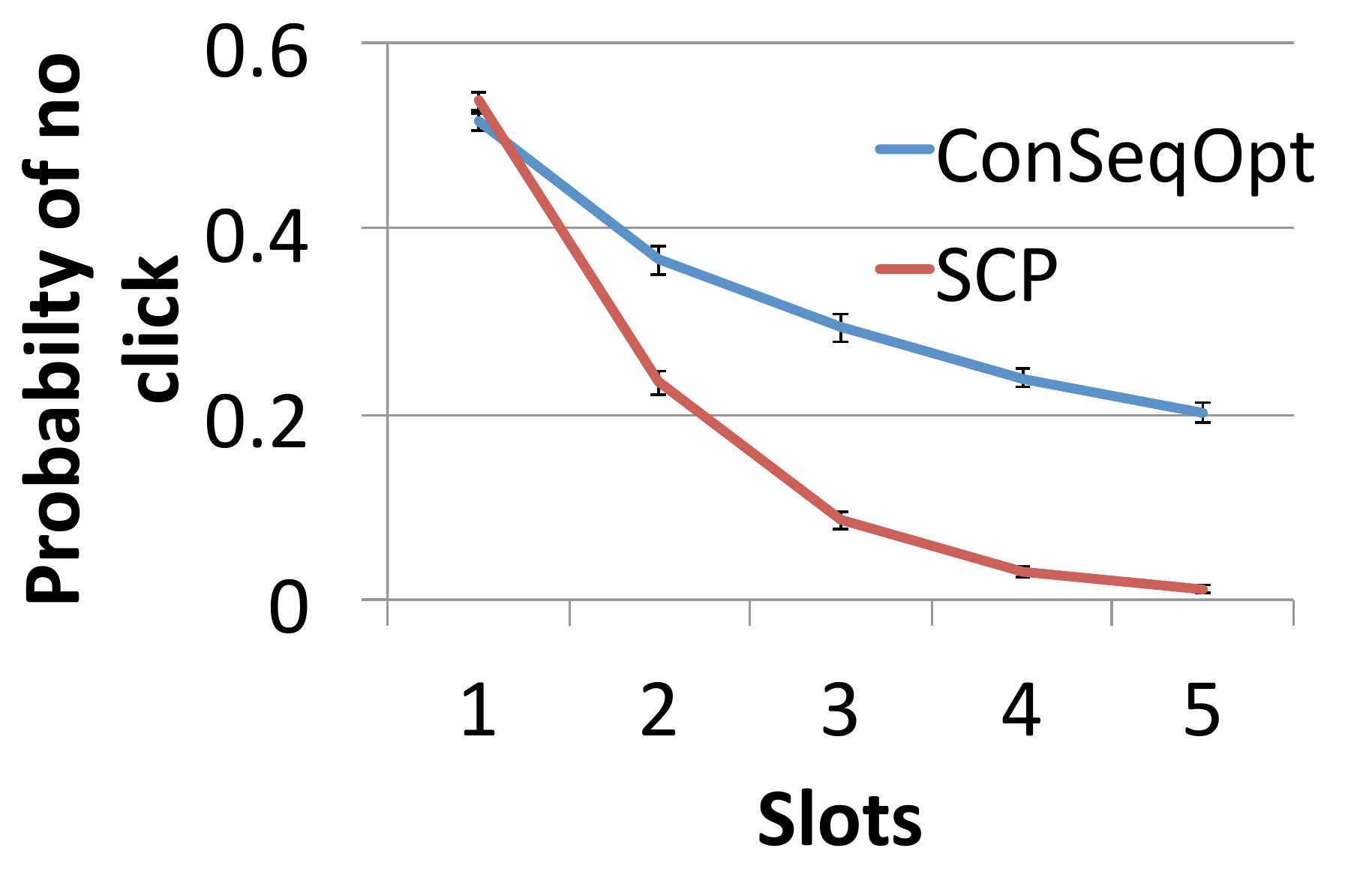}}\quad
\subfigure[]{\includegraphics[width=0.3\textwidth]{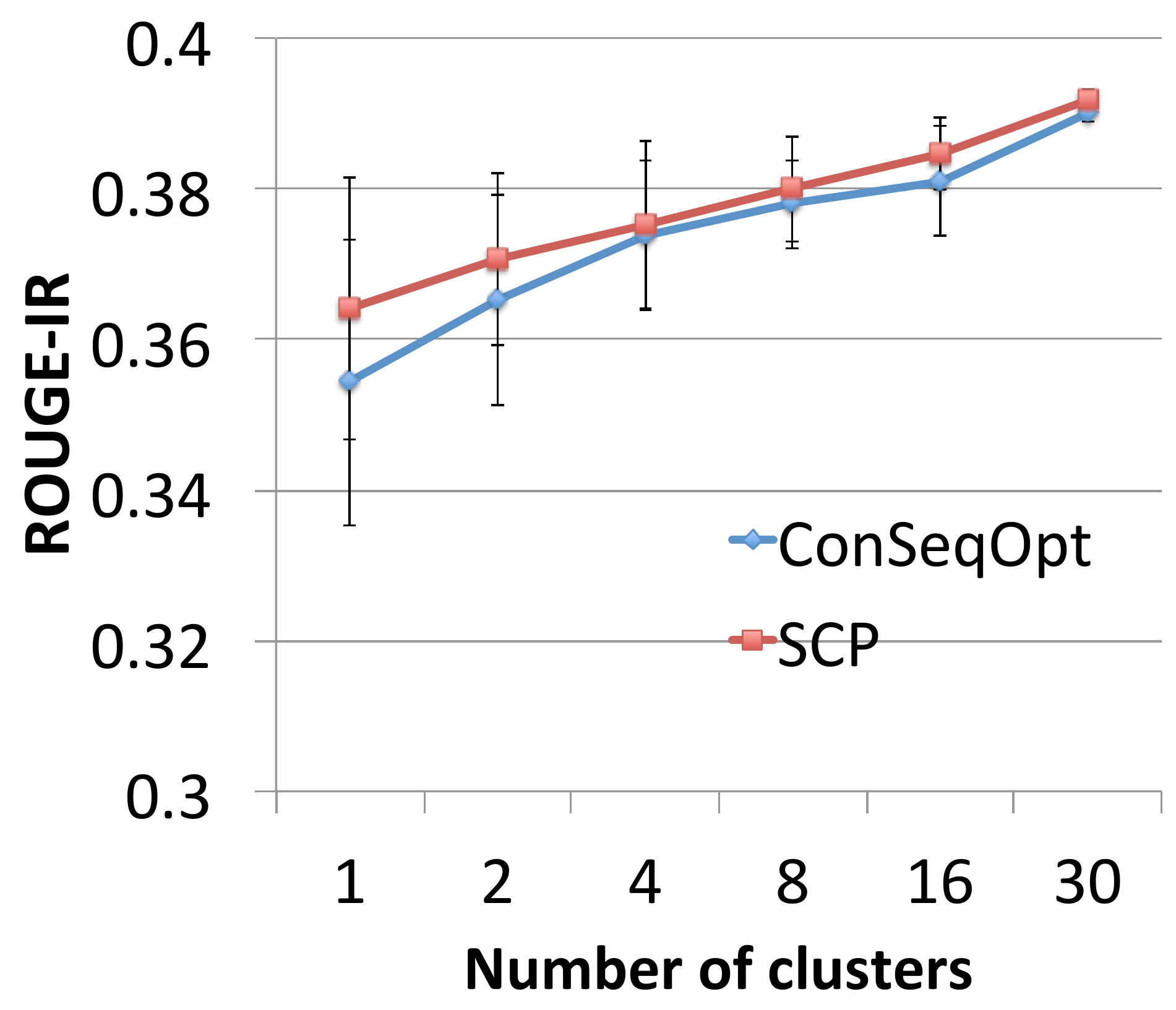}}\quad
\vspace{-0.1in}
\caption{(a)SCP performs better at even low data availability while ConSeqOpt
suffers from data starvation issues (b) With increase in slots SCP predicts news articles which have lower
probability of the user not clicking on any of them compared to ConSeqOpt (c) ROUGE-1R scores with respect to the size of the training data}
\label{all_figs}
\end{figure*}


\subsection{Robotic Manipulation Planning}
\todo{YY: I changed the introduction of this experimental setup to focus more on the learning task, rather than the overall task of grasp trajectory.  This is more in line with the other experiment descriptions.}
We applied SCP to a manipulation planning task for a $7$
degree-of-freedom robot manipulator.  The goal is to predict a set of initial trajectories so as to maximize the chance that one of them leads to a collision-free trajectory.  We use local trajectory optimization techniques such as CHOMP \cite{ratliff2009chomp}, which have proven effective in quickly finding collision-free trajectories using local perturbations of an initial trajectory.  Note that selecting a diverse set of initial trajectories is important since local techniques such as CHOMP often get stuck in local optima.\footnote{I.e., similar or redundant inital trajectories will lead to the same local optima.}



We use the dataset from \cite{deyConSeqOptRSS}. It consists of
$310$ training and $212$ test environments of random obstacle
configurations around a target object, and $30$ initial seed trajectories. In each environment, each seed trajectory has $17$ features describing the spatial properties of the trajectory relative to obstacles.\footnote{In addition to the base features, we add features of the
current list w.r.t. each initial trajectory. We use the per
feature minimum absolute distance and average absolute value of the distance
to the features of initial trajectories  in the list. We also use a bias feature
always set to $1$, and an indicator feature which is $1$ when selecting the element in
the first position, $0$ otherwise.}



Following \cite{deyConSeqOptRSS}, we employ a reduction of
cost-sensitive classification to regression as explained in
Section \ref{secCSCReduction}. We compare SCP to ConSeqOpt \cite{deyConSeqOptRSS} (which learns $k$ separate predictors), and Regression (regress success rate from features to sort seeds; this accounts for relevance but not diversity). 


Figure \ref{all_figs} (left) shows the failure probability over the test
environments versus the number of training environments. 
ConSeqOpt employs a reduction to $k$ classifiers.  As a consequence, ConSeqOpt faces data starvation issues for small training sizes, as there is little data available for training predictors lower in the list.\footnote{When a successful seed is found, benefits at later positions are 0. This effectively discards training environments for training classifiers lower in the list in ConSeqOpt.} \todo{SR: added footnote here as I don't think we explained what data starvation was in this new version. Also slightly tweaked the above 2 paragraphs so that it fits in 8 pages}
In contrast, SCP has no data starvation issue and outperforms both ConSeqOpt and
Regression.

\subsection{Personalized News Recommendation}

We built a stochastic user simulation based on $75$ user
preferences derived from a user study in 
\cite{Yue11LinearSubmod}. 
Using this simulation as a training oracle, our goal is to learn 
to recommend articles to any user (depending on their contextual features) to minimize the failure case where the user does not like any of the recommendations.\footnote{Also known as abandonment \cite{Radlinski2008ranked}.}

Articles are represented by features, and user preferences by linear weights.
We derived user contexts by soft-clustering users into groups, and using corrupted group memberships as contexts.

We perform five-fold cross validation. In each fold, we train SCP and ConSeqOpt on $40$ users'
preferences, use $20$ users for validation, and then test on the held-out $15$ users. Training, validation and testing are all performed via simulation. 
Figure \ref{all_figs} (middle) shows the results, where we see the recommendations made by 
SCP achieves significantly lower failure rate as the number of recommendations is increased from $1$ to $5$.

\subsection{Document Summarization}
In the extractive multi-document summarization task, the goal is to extract sentences (with character budget $B$) to maximize coverage of human-annotated summaries.
Following the experimental setup from \cite{lin2010multi} and \cite{kulesza2012learning}, we use data from the Document
Understanding Conference (DUC) 2003 and 2004 (Task 2) \cite{dang2005overview}. Each training or test instance corresponds to a cluster of documents, 
and contains
approximately $10$ documents belonging to the same topic and four human reference summaries. We train on the 2003 data (30 clusters) and test on the 2004 data (50 clusters). The budget is $B= 665$ bytes, including spaces.

We use the ROUGE \cite{lin2004rouge} unigram statistics (ROUGE-1R, ROUGE-1P,
ROUGE-1F) for performance evaluation. Our method directly attempts to optimize the
ROUGE-1R objective with respect to the reference summaries, which can be easily
shown to be monotone submodular \cite{lin2011class}.



We aim to predict sentences that are both short and informative. 
Therefore we maximize the normalized marginal benefit,
\begin{eqnarray}b'(s|L_{t,i-1}) = b(s|L_{t,i-1})/l(s),\label{eqn:normalized_benefit}\end{eqnarray}
where $l(s)$ is the length of the sentence $s$.\footnote{This results in a knapsack constrained optimization problem.  We expect our approach to perform well in this setting, but defer a formal analysis for future work.} We use a reduction to ranking as described in Section \ref{secCSCReduction} using \eqref{eqn:normalized_benefit}.
While not performance-optimized, our approach takes less than $15$ minutes to train.

Following \cite{kulesza2012learning}, we consider features $f_i$ for each sentence
consisting of \emph{quality features} $q_i$ and \emph{similarity features} $\phi_i$ ($f_i=[q_i^T,\phi_i^T]^T$). The quality features, attempt to capture the
representativeness for a single sentence. Similarity features $q_i$ for sentence
$s_i$ as we construct the list $L_t$ measure a notion of distance of a proposed sentence to sentences already included in the set.\footnote{
A variety of similarity features were considered, with the simplest being average squared distance of tf-idf vectors. Performance was very stable across different features.
The experiments presented use three types: 1) following the idea in \cite{kulesza2012learning} of similarity
as a volume metric, we compute the
squared volume of the parallelopiped spanned by the TF-IDF vectors of
sentences in the set $L_{t,k} \cup {s_i}$; 
2) the product between $\mbox{det}(G_{L_{t,k} \cup {s_i}})$ and the quality features; 3) the minimum absolute distance of quality features between $s_i$ and each element in $L_{t,k}$.
}

  
Table \ref{DocSumTable} shows the performance (Rouge unigram statistics) comparing SCP with existing algorithms. 
We observe that SCP outperforms existing state-of-the-art approaches, which we denote SubMod \cite{lin2010multi} and DPP \cite{kulesza2012learning}.
``Greedy (Oracle)'' corresponds to the clairvoyant oracle that directly 
optimizes the test Rouge score and thus serves as an upper bound on this class of techniques.
Figure \ref{all_figs} (right) plots Rouge-1R performance as a function of the size of training data, suggesting SCP's superior data-efficiency compared to ConSeqOpt. 

\begin{table}
    \tiny
\centering
  \begin{tabular}{|l || c | c | c | }
    \hline
    \bf{System} & \bf{ROUGE-1F} & \bf{ROUGE-1P} & \bf{ROUGE-1R} \\ 
    \hline 
    SubMod & 37.39 & 36.86 & 37.99 \\ 
    \hline
    DPP  & 38.27 & 37.87 & 38.71 \\
    \hline
    ConSeqOpt & $ 39.02\pm0.07$ & 39.08$\pm 0.07$ & 39.00$\pm0.12$ \\
    \hline
    SCP  & \textbf{39.15$\pm0.15$} & \textbf{39.16$\pm 0.15$} & \textbf{39.17$\pm0.15$} \\
    \hline
    Greedy (Oracle) & 44.92 & 45.14 & 45.24 \\
    \hline
  \end{tabular}
\caption{ROUGE unigram score on the DUC 2004 test set}
  \label{DocSumTable}
\end{table}
\normalsize

%

\subsection*{Acknowledgements}
\vspace{-0.06in}
\begin{small}
This research was supported in part by NSF NRI \emph{Purposeful Prediction} project 
and ONR MURIs \emph{Decentralized Reasoning in Reduced Information Spaces} and \emph{Provably Stable Vision-Based Control}.  Yisong Yue was also supported in part by ONR (PECASE) N000141010672 and ONR Young Investigator Program N00014-08-1-0752. We gratefully thank
Martial Hebert for valuable discussions and support.
\end{small}



\clearpage

\appendix

\newtheorem*{thmSCPDef}{Theorem \ref{thmSCP}}
\newtheorem*{corWMDef}{Corollary \ref{corWM}}
\newtheorem*{corSCPDef}{Corollary \ref{corSCP}}

\section{Proofs of Theoretical Results}

This appendix contains the proofs of the various theoretical results presented in this paper. 

\subsection{Preliminaries}

We begin by proving a number of lemmas about monotone submodular functions, which will be useful to prove our main results.

\begin{lemma} \label{lemAddList}
Let $\mathcal{S}$ be a set and $f$ be a monotone submodular function defined on list of items from $\mathcal{S}$. For any lists $A,B$, we have that:
\begin{displaymath}
f(A \oplus B) - f(A) \leq |B| ( \mathbb{E}_{s \sim U(B)}[f(A \oplus s)] - f(A) )
\end{displaymath}
for $U(B)$ the uniform distribution on items in $B$. 
\end{lemma}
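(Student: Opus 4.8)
The plan is to prove the bound by a telescoping decomposition of the left-hand side followed by a termwise application of submodularity. Write $n = |B|$ and express $B = b_1 \oplus b_2 \oplus \dots \oplus b_n$ as a list of its $n$ items. First I would decompose the total benefit of appending all of $B$ to $A$ as a sum of marginal benefits,
\begin{displaymath}
f(A \oplus B) - f(A) = \sum_{j=1}^n \left[ f(A \oplus b_1 \oplus \dots \oplus b_j) - f(A \oplus b_1 \oplus \dots \oplus b_{j-1}) \right],
\end{displaymath}
which is just a telescoping sum over the prefixes of $B$.

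The key step is to bound each summand using the submodularity property as stated in the Background section. Applying submodularity with $L_1 = A$, $L_2 = b_1 \oplus \dots \oplus b_{j-1}$, and $s = b_j$ gives
\begin{displaymath}
f(A \oplus b_1 \oplus \dots \oplus b_{j-1} \oplus b_j) - f(A \oplus b_1 \oplus \dots \oplus b_{j-1}) \leq f(A \oplus b_j) - f(A),
\end{displaymath}
i.e. the marginal benefit of item $b_j$ when appended to the longer list $A \oplus b_1 \oplus \dots \oplus b_{j-1}$ is no larger than its marginal benefit when appended to $A$ alone. Summing this inequality over $j$ yields $f(A \oplus B) - f(A) \leq \sum_{j=1}^n \left[ f(A \oplus b_j) - f(A) \right]$.

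Finally I would recognize the averaged term as an expectation: since $U(B)$ is the uniform distribution over the $n$ positions of $B$, we have $\mathbb{E}_{s \sim U(B)}[f(A \oplus s)] = \frac{1}{n}\sum_{j=1}^n f(A \oplus b_j)$, so that $\sum_{j=1}^n \left[ f(A \oplus b_j) - f(A) \right] = n\left( \mathbb{E}_{s \sim U(B)}[f(A \oplus s)] - f(A) \right)$. Substituting back and using $n = |B|$ gives exactly the claimed inequality.

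There is no substantial obstacle here; the argument is a clean two-line telescoping plus submodularity. The only point requiring mild care is the interpretation of $U(B)$ when $B$ contains repeated items: I would read $U(B)$ as the uniform distribution over the $n$ positions (equivalently, weighting each distinct item by its multiplicity), which is what makes the average $\frac{1}{n}\sum_j f(A \oplus b_j)$ coincide with $\mathbb{E}_{s \sim U(B)}[f(A \oplus s)]$ and keeps the factor $|B|$ exact. Monotonicity is not even needed for this particular inequality; only submodularity and the telescoping identity are used.
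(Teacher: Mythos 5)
Your proof is correct and is essentially identical to the paper's own argument: the same telescoping over prefixes of $B$, the same termwise application of submodularity with $L_1 = A$, $L_2 = b_1 \oplus \dots \oplus b_{j-1}$, $s = b_j$, and the same identification of the average as $\mathbb{E}_{s \sim U(B)}[f(A\oplus s)]$. Your remark about interpreting $U(B)$ over positions when $B$ has repeated items is a reasonable clarification the paper leaves implicit.
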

\begin{proof}
For any list $A$ and $B$, let $B_i$ denote the list of the first $i$ items in $B$, and $b_i$ the $i^{th}$ item in $B$. We have that:
\begin{displaymath}
\begin{array}{rl}
\multicolumn{2}{l}{f(A \oplus B) - f(A)}\\
= & \sum_{i=1}^{|B|} f(A \oplus B_i) -f(A \oplus B_{i-1})\\
\leq & \sum_{i=1}^{|B|} f(A \oplus b_i) -f(A) \\
= & |B| ( \mathbb{E}_{b \sim U(B)}[f(A \oplus b)] - f(A) )\\
\end{array}
\end{displaymath}
where the inequality follows from the submodularity property of $f$.
\end{proof}

\begin{lemma} \label{lemBudgetError}
Let $\mathcal{S}$ be a set, and $f$ a monotone submodular function defined on lists of items in $\mathcal{S}$. Let $A,B$ be any lists of items from $\mathcal{S}$. Denote $A_j$ the list of the first $j$ items in $A$, $U(B)$ the uniform distribution on items in $B$ and define $\epsilon_j = \mathbb{E}_{s \sim U(B)}[f(A_{j-1} \oplus s)] - f(A_{j})$, the additive error term in competing with the average marginal benefits of the items in $B$ when picking the $j^{th}$ item in $A$ (which could be positive or negative). Then:
\begin{displaymath}
f(A) \geq (1-(1-1/|B|)^{|A|}) f(B) - \sum_{i=1}^{|A|} (1-1/|B|)^{|A|-i} \epsilon_i
\end{displaymath}
In particular if $|A| = |B| = k$, then:
\begin{displaymath}
f(A) \geq (1-1/e) f(B) - \sum_{i=1}^k (1-1/k)^{k-i} \epsilon_i
\end{displaymath}
and for $\alpha = \exp(-|A|/|B|)$ (i.e. $|A| = |B| \log(1/\alpha)$):
\begin{displaymath}
f(A) \geq (1-\alpha) f(B) - \sum_{i=1}^{|A|} (1-1/|B|)^{|A|-i} \epsilon_i
\end{displaymath}
\end{lemma}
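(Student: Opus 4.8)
The plan is to turn the definition of $\epsilon_j$ into a one-step recursion on the optimality gap $\delta_j := f(B) - f(A_j)$ and then unroll it as a geometric recurrence. First I would fix a position $j$ and apply Lemma~\ref{lemAddList} with the prefix $A_{j-1}$ in the role of the outer list and $B$ in the role of the list being averaged over. This gives
\begin{displaymath}
f(A_{j-1} \oplus B) - f(A_{j-1}) \leq |B|\,\bigl( \mathbb{E}_{s \sim U(B)}[f(A_{j-1} \oplus s)] - f(A_{j-1}) \bigr).
\end{displaymath}
By monotonicity $f(A_{j-1} \oplus B) \geq f(B)$, so the left side is at least $f(B) - f(A_{j-1})$, which rearranges to the averaged-greedy bound $\mathbb{E}_{s \sim U(B)}[f(A_{j-1} \oplus s)] - f(A_{j-1}) \geq \tfrac{1}{|B|}(f(B) - f(A_{j-1}))$.

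Next I would substitute the definition $\mathbb{E}_{s \sim U(B)}[f(A_{j-1}\oplus s)] = f(A_j) + \epsilon_j$ into this inequality. After moving terms, this yields $f(A_j) \geq f(A_{j-1}) + \tfrac{1}{|B|}(f(B) - f(A_{j-1})) - \epsilon_j$, which in terms of the gap $\delta_j = f(B) - f(A_j)$ becomes the clean recurrence
\begin{displaymath}
\delta_j \leq \left(1 - \tfrac{1}{|B|}\right)\delta_{j-1} + \epsilon_j .
\end{displaymath}
I would then unroll this from $j=1$ to $|A|$, starting from $\delta_0 = f(B) - f(\emptyset) = f(B)$ (using $f(\emptyset)=0$), to obtain $\delta_{|A|} \leq (1-1/|B|)^{|A|} f(B) + \sum_{i=1}^{|A|} (1-1/|B|)^{|A|-i}\epsilon_i$. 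Rewriting $\delta_{|A|} = f(B) - f(A)$ and rearranging gives exactly the claimed bound $f(A) \geq (1-(1-1/|B|)^{|A|})f(B) - \sum_{i=1}^{|A|}(1-1/|B|)^{|A|-i}\epsilon_i$.

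Finally, the two special cases follow by bounding the leading coefficient: for $|A|=|B|=k$ I would use $(1-1/k)^k \leq 1/e$, and for general $\alpha = \exp(-|A|/|B|)$ I would use the elementary inequality $1-x \leq e^{-x}$ to get $(1-1/|B|)^{|A|} \leq \exp(-|A|/|B|) = \alpha$, so that $1-(1-1/|B|)^{|A|} \geq 1-\alpha$; the penalty sum is unchanged. The only real care needed anywhere is in the unrolling step, since the $\epsilon_i$ may be negative and I must keep the geometric weights $(1-1/|B|)^{|A|-i}$ aligned correctly with each index; beyond bookkeeping the argument is routine, with the substantive content concentrated in the single application of Lemma~\ref{lemAddList} combined with monotonicity.
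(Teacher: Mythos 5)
Your proposal is correct and follows essentially the same route as the paper's proof: apply Lemma~\ref{lemAddList} together with monotonicity to get the averaged-greedy bound, substitute the definition of $\epsilon_j$ to obtain the recurrence $\Delta_j \leq (1-1/|B|)\Delta_{j-1} + \epsilon_j$, unroll from $\Delta_0 = f(B)$, and bound $(1-1/|B|)^{|A|} \leq \exp(-|A|/|B|)$ for the special cases. The only cosmetic difference is an index shift in how the recurrence is written.
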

\begin{proof}
Using the monotone property and previous lemma \ref{lemAddList}, we must have that: $f(B)-f(A) \leq f(A \oplus B) - f(A) \leq  |B| ( \mathbb{E}_{b \sim U(B)}[f(A \oplus b)] - f(A) )$.

Now let $\Delta_j = f(B) - f(A_j)$. By the above we have that
\begin{displaymath}
\begin{array}{rl}
\multicolumn{2}{l}{\Delta_j}\\ 
\leq & |B| [\mathbb{E}_{s \sim U(B)} [f(A_j \oplus s)] - f(A_j)] \\
= & |B| [\mathbb{E}_{s \sim U(B)} [f(A_j \oplus s)] - f(A_{j+1}) \\
& + f(A_{j+1}) - f(B) + f(B) - f(A_j)] \\
= &  |B| [\epsilon_{j+1} + \Delta_j - \Delta_{j+1} ] \\
\end{array}
\end{displaymath}

Rearranging terms, this implies that $\Delta_{j+1} \leq (1-1/|B|) \Delta_j + \epsilon_{j+1}$. Recursively expanding this recurrence from $\Delta_{|A|}$, we obtain:
\begin{displaymath}
\Delta_{|A|} \leq (1-1/|B|)^{|A|} \Delta_0 + \sum_{i = 1}^{|A|} (1-1/|B|)^{|A|-i} \epsilon_i
\end{displaymath}
Using the definition of $\Delta_{|A|}$ and rearranging terms,  we obtain $f(A) \geq (1-(1-1/|B|)^{|A|}) f(B) - \sum_{i=1}^{|A|} (1-1/|B|)^{|A|-i} \epsilon_i$. This proves the first statement of the theorem. The following two statements follow from the observations that $(1-1/|B|)^{|A|} = \exp(|A|\log(1-1/|B|)) \leq \exp(-|A|/|B|) = \alpha$. Hence $(1-(1-1/|B|)^{|A|}) f(B) \geq (1-\alpha) f(B)$. When $|A|=|B|$, $\alpha = 1/e$ and this proves the special case where $|A| = |B|$.
\end{proof}

For the greedy list construction strategy, the $\epsilon_j$ in the last lemma are always $\leq 0$, such that Lemma \ref{lemBudgetError} implies that if we construct a list of size $k$ with greedy, it must achieve at least ~63\% of the value of the optimal list of size $k$, but also that it must achieve at least ~95\% of the value of the optimal list of size $\lfloor k/3 \rfloor$, and at least ~99.9\% of the value of the optimal list of size $\lfloor k/7 \rfloor$.

A more surprising fact that follows from the last lemma is that constructing a list stochastically, by sampling items from a particular fixed distribution, can provide the same guarantee as greedy: 

\begin{lemma} \label{lemStochasticRatio}
Let $\mathcal{S}$ be a set, and $f$ a monotone submodular function defined on lists of items in $\mathcal{S}$. Let $B$ be any list of items from $\mathcal{S}$ and $U(B)$ the uniform distribution on elements in $B$. Suppose we construct the list $A$ by sampling $k$ items randomly from $U(B)$ (with replacement). Denote $A_j$ the list obtained after $j$ samples, and $P_j$ the distribution over lists obtained after $j$ samples. Then:
\begin{displaymath}
\mathbb{E}_{A \sim P_k}[f(A)] \geq (1-(1-1/|B|)^k) f(B)
\end{displaymath}
In particular, for $\alpha = \exp(-k/|B|)$:
\begin{displaymath}
\mathbb{E}_{A \sim P_k}[f(A)] \geq (1-\alpha) f(B)
\end{displaymath}
\end{lemma}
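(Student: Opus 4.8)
The plan is to apply Lemma \ref{lemBudgetError} to the random list $A$ and then take expectations, exploiting the fact that the additive error terms $\epsilon_j$ vanish in expectation when each item is drawn from $U(B)$. First I would observe that Lemma \ref{lemBudgetError} holds for \emph{any} fixed list $A$ of size $k$; in particular, it holds for every realization of the random sampling procedure. Thus, pointwise over the sample space,
\[
f(A) \geq (1-(1-1/|B|)^{k}) f(B) - \sum_{i=1}^{k} (1-1/|B|)^{k-i} \epsilon_i,
\]
where $\epsilon_i = \mathbb{E}_{s \sim U(B)}[f(A_{i-1}\oplus s)] - f(A_i)$ and $A_i = A_{i-1}\oplus a_i$, with $a_i$ the $i^{th}$ sampled item.

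Next I would take the expectation of both sides over the random draws $a_1,\dots,a_k$. Since the coefficients $(1-1/|B|)^{k-i}$ and $(1-(1-1/|B|)^k)$ are deterministic constants, by linearity it only remains to evaluate $\mathbb{E}[\epsilon_i]$. Using the law of iterated expectations, I would condition on $A_{i-1}$ (the list determined by the first $i-1$ draws) and note that the $i^{th}$ draw $a_i$ is an independent sample from $U(B)$. Hence $\mathbb{E}[f(A_{i-1}\oplus a_i)\mid A_{i-1}] = \mathbb{E}_{s\sim U(B)}[f(A_{i-1}\oplus s)]$, so that $\mathbb{E}[\epsilon_i \mid A_{i-1}] = 0$, and therefore $\mathbb{E}[\epsilon_i]=0$.

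Combining these yields $\mathbb{E}_{A\sim P_k}[f(A)] \geq (1-(1-1/|B|)^k) f(B)$, which is the first claim. The second ($(1-\alpha)$ form) then follows from the elementary bound $(1-1/|B|)^k = \exp(k\log(1-1/|B|)) \leq \exp(-k/|B|) = \alpha$, exactly as in the closing step of the proof of Lemma \ref{lemBudgetError}.

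The main obstacle is really the key idea rather than a technical difficulty: recognizing that randomizing each item according to $U(B)$ makes the per-step greedy ``shortfall'' $\epsilon_i$ mean-zero, so that the stochastic construction matches the deterministic greedy guarantee in expectation. The only point requiring genuine care is the conditioning step — the weights multiplying $\epsilon_i$ are deterministic and $a_i$ is independent of $A_{i-1}$, which is precisely what lets the conditional expectation collapse; applying the deterministic Lemma \ref{lemBudgetError} pointwise and only then passing to expectations is what makes this clean.
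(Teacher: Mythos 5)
Your proof is correct, but it is organized differently from the paper's. The paper does not invoke Lemma \ref{lemBudgetError} at all here: it re-runs the telescoping argument from scratch in expectation, defining $\Delta_j = f(B) - \mathbb{E}_{A_j \sim P_j}[f(A_j)]$, applying Lemma \ref{lemAddList} under the expectation over $P_j$, and using the identity $\mathbb{E}_{A_{j+1} \sim P_{j+1}}[f(A_{j+1})] = \mathbb{E}_{A_j \sim P_j}[\mathbb{E}_{s \sim U(B)}[f(A_j \oplus s)]]$ to get the clean recursion $\Delta_{j+1} \leq (1-1/|B|)\Delta_j$ with no error term. You instead treat Lemma \ref{lemBudgetError} as a black box, apply it pointwise to each realization of the sampled list, and observe that the shortfalls $\epsilon_i$ are mean-zero once you condition on $A_{i-1}$ and use that the $i^{th}$ draw is an independent sample from $U(B)$ — the same independence fact the paper uses, just deployed one level up. The two arguments are mathematically equivalent (both reduce to the same recursion), but yours is more modular: it makes explicit that the stochastic guarantee is exactly the deterministic error-tolerant guarantee with the errors averaged out, whereas the paper's version avoids any appeal to the earlier lemma at the cost of repeating its telescoping computation. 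Your conditioning step is stated with the right care (the weights $(1-1/|B|)^{k-i}$ are deterministic, and $a_i$ is independent of $A_{i-1}$), so there is no gap.
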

\begin{proof}
The proof follows a similar proof to the previous lemma. Recall that by the monotone property and lemma \ref{lemAddList}, we have that for any list $A$: $f(B)-f(A) \leq f(A \oplus B) - f(A) \leq  |B| ( \mathbb{E}_{b \sim U(B)}[f(A \oplus b)] - f(A) )$. Because this holds for all lists, we must also have that for any distribution $P$ over lists $A$, $f(B) - \mathbb{E}_{A \sim P}[f(A)] \leq |B| \mathbb{E}_{A \sim P}[\mathbb{E}_{b \sim U(B)}[f(A \oplus b)] - f(A)]$. Also note that by the way we construct sets, we have that $\mathbb{E}_{A_{j+1} \sim P_{j+1}}[f(A_{j+1})] = \mathbb{E}_{A_j \sim P_j}[\mathbb{E}_{s \sim U(B)} [f(A_j \oplus s)]]$

Now let $\Delta_j = f(B) - \mathbb{E}_{A_j \sim P_j}[f(A_j)]$. By the above we have that:
\begin{displaymath}
\begin{array}{rl}
\multicolumn{2}{l}{\Delta_j}\\ 
\leq & |B| \mathbb{E}_{A_j \sim P_j}[\mathbb{E}_{s \sim U(B)} [f(A_j \oplus s)] - f(A_j)] \\
= & |B| \mathbb{E}_{A_j \sim P_j}[\mathbb{E}_{s \sim U(B)} [f(A_j \oplus s)]  - f(B) \\
& + f(B) - f(A_j)] \\
= & |B|( \mathbb{E}_{A_{j+1} \sim P_{j+1}}[f(A_{j+1)}]  - f(B) \\
& + f(B) - \mathbb{E}_{A_j \sim P_j}[f(A_j)] )\\
= &  |B| [\Delta_j - \Delta_{j+1} ] \\
\end{array}
\end{displaymath}
Rearranging terms, this implies that $\Delta_{j+1} \leq (1-1/|B|) \Delta_j$. Recursively expanding this recurrence from $\Delta_k$, we obtain:
\begin{displaymath}
\Delta_{k} \leq (1-1/|B|)^k \Delta_0
\end{displaymath}
Using the definition of $\Delta_k$ and rearranging terms we obtain $E_{A \sim P_k}[f(A)] \geq (1-(1-1/|B|)^k) f(B)$. The second statement follows again from the fact that $(1-(1-1/|B|)^k) f(B) \geq (1-\alpha) f(B)$
\end{proof}

\begin{corollary}
There exists a distribution that when sampled $k$ times to construct a list, achieves an approximation ratio of $(1-1/e)$ of the optimal list of size $k$ in expectation. In particular, if $A^*$ is an optimal list of size $k$, sampling $k$ times from $U(A^*)$ achieves this approximation ratio. Additionally, for any $\alpha \in (0,1]$, sampling $\lceil k\log(1/\alpha) \rceil$ times must construct a list that achieves an approximation ratio of $(1-\alpha)$ in expectation.
\end{corollary}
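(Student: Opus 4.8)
The plan is to derive this corollary as an immediate specialization of Lemma~\ref{lemStochasticRatio}, instantiating the arbitrary list $B$ to be an optimal list $A^*$ of size $k$. With this choice we have $|B| = k$ and $f(B) = f(A^*) = F(L^*_k)$, the optimal value, so the generic bound of the lemma turns directly into an approximation guarantee relative to the best list of size $k$.

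For the first claim I would exhibit the distribution explicitly as $U(A^*)$, the uniform distribution over the items of $A^*$. Sampling $k$ times with replacement from $U(A^*)$ and applying Lemma~\ref{lemStochasticRatio} with $B = A^*$ gives $\mathbb{E}_{A \sim P_k}[f(A)] \geq (1 - (1-1/k)^k) f(A^*)$. It then remains only to invoke the elementary bound $(1-1/k)^k \leq \exp(-1) = 1/e$, which yields the claimed $(1-1/e)$ ratio. This is the same inequality $1-x \leq e^{-x}$ already exploited in the proof of Lemma~\ref{lemBudgetError}, so no new tools are required.

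For the second claim I would again sample from $U(A^*)$, this time $n = \lceil k\log(1/\alpha) \rceil$ times. Lemma~\ref{lemStochasticRatio} gives $\mathbb{E}_{A \sim P_n}[f(A)] \geq (1 - (1-1/k)^n) f(A^*) \geq (1 - \exp(-n/k)) f(A^*)$. Since $n \geq k\log(1/\alpha)$ we have $n/k \geq \log(1/\alpha)$, hence $\exp(-n/k) \leq \exp(-\log(1/\alpha)) = \alpha$, which delivers the $(1-\alpha)$ ratio.

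The corollary is essentially free once Lemma~\ref{lemStochasticRatio} is in hand, so I do not expect a substantive obstacle. The only point demanding care is the direction of the ceiling in $\lceil k\log(1/\alpha)\rceil$: one must round \emph{up} so that $n \geq k\log(1/\alpha)$ and the exponential bound closes in the correct direction. Beyond this bit of bookkeeping, the argument is a one-line instantiation of the lemma.
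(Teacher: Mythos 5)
Your proposal is correct and matches the paper's proof, which simply instantiates Lemma~\ref{lemStochasticRatio} with $B = A^*$; the bounds $(1-1/k)^k \leq e^{-1}$ and $\exp(-n/k)\leq\alpha$ for $n \geq k\log(1/\alpha)$ are already built into the second statement of that lemma. Your extra care about rounding up in the ceiling is sound but adds nothing beyond what the lemma already provides.
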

\begin{proof}
Follows from the last lemma using $B=A^*$.
\end{proof}

This surprising result can also be seen as a special case of a more general result proven in prior related work that analyzed randomized set selection strategies to optimize submodular functions (lemma 2.2 in \cite{feige2011}). 

\subsection{Proofs of Main Results}

We now provide the proofs of the main results in this paper. We provide the proofs for the more general contextual case where we learn over a policy class $\tilde{\Pi}$. All the results for the context-free case can be seen as special cases of these results when $\Pi = \tilde{\Pi} = \{ \pi_s | s \in \mathcal{S} \}$ and $\pi_s(x,L) = s$ for any state $x$ and list $L$.

We refer the reader to the notation defined in section \ref{sec:background} and \ref{sec:contextual} for the definitions of the various terms used.

\begin{thmSCPDef}
Let $\alpha = \exp(-m/k)$ and $k' = \min(m,k)$. After $T$ iterations, for any $\delta,\delta' \in (0,1)$, we have that with probability at least $1-\delta$:
\begin{displaymath}
F(\overline{\pi},m) \geq (1-\alpha) F(L^*_{\pi,k}) - \frac{R}{T} - 2 \sqrt{\frac{2 \ln(1/\delta)}{T}}
\end{displaymath}
and similarly, with probability at least $1-\delta-\delta'$:
\begin{displaymath}
\begin{array}{rl}
F(\overline{\pi},m) \geq & (1-\alpha) F(L^*_{\pi,k}) - \frac{\mathbb{E}[R]}{T}  - \sqrt{\frac{2k'\ln(1/\delta')}{T}} \\
& - 2 \sqrt{\frac{2 \ln(1/\delta)}{T}}
\end{array}
\end{displaymath}
\end{thmSCPDef}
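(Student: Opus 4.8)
The plan is to reduce the submodular guarantee to the online cost-sensitive regret by applying Lemma \ref{lemBudgetError} once per training state and then summing. Fix an iteration $t$ and write $L^*_{\pi,k}(x_t) = (\pi^*_1(x_t),\dots,\pi^*_k(x_t))$ for the items the optimal list of policies produces on $x_t$. Applying Lemma \ref{lemBudgetError} to $f_{x_t}$ with $A = L_t$ (the length-$m$ list the algorithm built) and $B = L^*_{\pi,k}(x_t)$ (length $k$), and using $(1-1/k)^m \le e^{-m/k} = \alpha$ together with $f_{x_t}\ge 0$, gives
$$f_{x_t}(L_t) \ge (1-\alpha)\, f_{x_t}(L^*_{\pi,k}(x_t)) - \sum_{i=1}^m w_{ti}\,\epsilon_{ti},$$
where $w_{ti} = (1-1/k)^{m-i}$ and $\epsilon_{ti} = \mathbb{E}_{s\sim U(B)}[f_{x_t}(L_{t,i-1}\oplus s)] - f_{x_t}(L_{t,i})$. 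The first step is to re-express these error terms through the cost vectors $c_{ti}$ of Algorithm \ref{algSCP}: writing each marginal benefit as $b(s\mid L_{t,i-1},x_t) = \max_{s'} b(s'\mid L_{t,i-1},x_t) - c_{ti}(s)$, the maxima cancel and $\epsilon_{ti} = c_{ti}(a_{ti}) - \mathbb{E}_{s\sim U(B)}[c_{ti}(s)]$, where $a_{ti}$ is the item actually placed at position $i$.

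Next I would turn the averaged term into the loss of the comparator policies. Since each $\pi^*_j\in\Pi\subseteq\tilde\Pi$ ignores the list, $c_{ti}(\pi^*_j(x_t)) = c_{ti}(\pi^*_j(v_{ti}))$, so $\sum_i w_{ti}\,\mathbb{E}_{s\sim U(B)}[c_{ti}(s)] = \frac1k\sum_{j=1}^k \ell_t(\pi^*_j)$. For a \emph{deterministic} learner $a_{ti} = \pi_t(v_{ti})$, hence $\sum_i w_{ti}\,c_{ti}(a_{ti}) = \ell_t(\pi_t)$, giving the identity $\sum_t\sum_i w_{ti}\epsilon_{ti} = \sum_t \ell_t(\pi_t) - \frac1k\sum_j\sum_t \ell_t(\pi^*_j)$. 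The crucial observation is that each $\pi^*_j$ lies in $\tilde\Pi$, so $\sum_t \ell_t(\pi^*_j)\ge \min_{\pi\in\tilde\Pi}\sum_t\ell_t(\pi)$; averaging over $j$ shows $\frac1k\sum_j\sum_t\ell_t(\pi^*_j)\ge \min_\pi\sum_t\ell_t(\pi)$, and therefore $\sum_t\sum_i w_{ti}\epsilon_{ti}\le \sum_t\ell_t(\pi_t) - \min_\pi\sum_t\ell_t(\pi) = R$. Summing the per-state inequality over $t$ then yields $\frac1T\sum_t f_{x_t}(L_t) \ge (1-\alpha)\frac1T\sum_t f_{x_t}(L^*_{\pi,k}(x_t)) - R/T$.

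For the \emph{randomized} learner, $a_{ti} = \pi'_{ti}(v_{ti})$ with $\pi'_{ti}\sim\pi_t$ sampled independently per position, so $\sum_i w_{ti}c_{ti}(a_{ti})$ is a realized loss rather than $\mathbb{E}_{\pi'\sim\pi_t}[\ell_t(\pi')]$. I would control the difference with a second martingale: ordering the $(t,i)$ pairs, the terms $w_{ti}c_{ti}(\pi'_{ti}(v_{ti})) - w_{ti}\mathbb{E}_{\pi'\sim\pi_t}[c_{ti}(\pi'(v_{ti}))]$ form a bounded martingale-difference sequence with vanishing conditional expectations. Since $c_{ti}\in[0,1]$ and $\sum_i w_{ti} = k(1-(1-1/k)^m)\le \min(m,k)=k'$, the quadratic variation per round is at most $k'$, so Azuma gives $\sum_t\sum_i w_{ti}c_{ti}(a_{ti}) \le \sum_t\mathbb{E}_{\pi'\sim\pi_t}[\ell_t(\pi')] + \sqrt{2Tk'\ln(1/\delta')}$ with probability $\ge 1-\delta'$; combined with $\sum_t\mathbb{E}_{\pi'\sim\pi_t}[\ell_t(\pi')] = \mathbb{E}[R] + \min_\pi\sum_t\ell_t(\pi)$ and the comparator argument above, this gives $\sum_t\sum_i w_{ti}\epsilon_{ti}\le \mathbb{E}[R] + \sqrt{2Tk'\ln(1/\delta')}$.

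Finally I would pass from empirical to population quantities. Both $f_{x_t}(L_t)$ (conditional mean $F(\pi_t,m)$, since $\pi_t$ depends only on $x_{1:t-1}$) and $f_{x_t}(L^*_{\pi,k}(x_t))$ (mean $F(L^*_{\pi,k})$, as the comparator list is fixed) concentrate around their means. Rather than two separate bounds, I would form the single martingale difference $Y_t = [f_{x_t}(L_t) - F(\pi_t,m)] - (1-\alpha)[f_{x_t}(L^*_{\pi,k}(x_t)) - F(L^*_{\pi,k})]$, which has zero conditional mean, is bounded by $2$, and satisfies $\frac1T\sum_t F(\pi_t,m) = F(\overline{\pi},m)$; Azuma then gives $\frac1T\sum_t Y_t\le 2\sqrt{2\ln(1/\delta)/T}$ with probability $\ge 1-\delta$, which is exactly the last error term. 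Substituting the empirical inequality and rearranging yields both claims (deterministic with $R$ and probability $1-\delta$; randomized with $\mathbb{E}[R]$, the extra $\sqrt{2k'\ln(1/\delta')/T}$, and probability $1-\delta-\delta'$ after a union bound). I expect the main obstacle to be the randomized case: correctly setting up the per-position sampling martingale, verifying that its conditional expectation reproduces $\mathbb{E}_{\pi'\sim\pi_t}[\ell_t(\pi')]$ even though the list features $v_{ti}$ are themselves built from the sampled policies, and tracking the bound $\sum_i w_{ti}\le k'$ that produces the $\sqrt{k'}$ scaling.
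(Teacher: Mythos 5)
Your proposal is correct and follows essentially the same route as the paper's proof: Lemma \ref{lemBudgetError} applied with $A=L_t$ and $B=L^*_{\pi,k}(x_t)$, the observation that the uniform average over the $k$ optimal policies (all in $\Pi\subseteq\tilde\Pi$) is dominated by the best fixed policy in $\tilde\Pi$ so the weighted errors telescope into the online regret $R$, a per-position Azuma martingale with quadratic variation at most $Tk'$ to replace realized losses by $\mathbb{E}_{\pi'\sim\pi_t}[\ell_t(\pi')]$ in the randomized case, and a second Azuma martingale over the i.i.d.\ states to pass to $F(\overline{\pi},m)$ and $F(L^*_{\pi,k})$. The only differences are presentational (you phrase the comparator step through the cost vectors $c_{ti}$ rather than the benefits, and apply the lemma per state before summing), and the subtlety you flag about the conditional expectation reproducing $\mathbb{E}_{\pi'\sim\pi_t}[\ell_t(\pi')]$ on the realized features resolves exactly as you describe.
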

\begin{proof}
\begin{displaymath}
\begin{array}{rl}
\multicolumn{2}{l}{F(\overline{\pi},m)} \\
= & \frac{1}{T} \sum_{t=1}^T F(\pi_t, m)\\
=  & \frac{1}{T}\sum_{t=1}^T \mathbb{E}_{L_{\pi,m} \sim \pi_t} [\mathbb{E}_{x \sim D} [ f_x(L_{\pi,m}(x)) ] ]\\
= & (1-\alpha)\mathbb{E}_{x \sim D}[f_x(L^*_{\pi,k}(x))] \\
& - [(1-\alpha)\mathbb{E}_{x \sim D}[f_x(L^*_{\pi,k}(x))]  \\
& - \frac{1}{T}\sum_{t=1}^T \mathbb{E}_{L_{\pi,m} \sim \pi_t} [\mathbb{E}_{x \sim D} [ f_x(L_{\pi,m}(x)) ] ] ] \\
\end{array}
\end{displaymath}
Now consider the sampled states $\{x_t\}_{t=1}^T$ and the policies $\pi_{t,i}$ sampled i.i.d. from $\pi_t$ to construct the lists $\{L_t\}_{t=1}^T$ and denote the random variables $X_t = (1-\alpha) ( \mathbb{E}_{x \sim D}[f_x(L^*_{\pi,k}(x))] - f_{x_t}(L^*_{\pi,k}(x_t)) ) - \mathbb{E}_{L_{\pi,m} \sim \pi_t} [\mathbb{E}_{x \sim D} [ f_x(L_{\pi,m}(x)) ] ] - f_{x_t}(L_t) ]$. If $\pi_t$ is deterministic, then simply consider all $\pi_{t,i} = \pi_t$. Because the $x_t$ are i.i.d. from $D$, and the distribution of policies used to construct $L_t$ only depends on $\{ x_\tau \}_{\tau=1}^{t-1}$ and $\{ L_{\tau} \}_{\tau=1}^{t-1}$, then the $X_t$ conditioned on $\{ X_{\tau} \}_{\tau=1}^{t-1}$ have expectation 0, and because $f_x \in [0,1]$ for all state $x \in \mathcal{X}$, $X_t$ can vary in a range $r \subseteq [-2,2]$. Thus the sequence of random variables $Y_t = \sum_{i=1}^t X_i$, for t =$1$ to $T$, forms a martingale where $|Y_t - Y_{t+1}| \leq 2$. By the Azuma-Hoeffding's inequality, we have that $P(Y_T/T \geq \epsilon) \leq \exp(-\epsilon^2 T/8)$. Hence for any $\delta \in (0,1)$, we have that with probability at least $1-\delta$,  $Y_T/T \leq 2 \sqrt{\frac{2 \ln(1/\delta)}{T}}$. Hence we have that with probability at least $1-\delta$:
\begin{displaymath}
\begin{array}{rl}
\multicolumn{2}{l}{F(\overline{\pi},m)} \\
= & (1-\alpha)\mathbb{E}_{x \sim D}[f_x(L^*_{\pi,k}(x))] \\
& - [(1-\alpha)\mathbb{E}_{x \sim D}[f_x(L^*_{\pi,k}(x))] \\
& - \frac{1}{T}\sum_{t=1}^T \mathbb{E}_{L_{\pi,m} \sim \pi_t} [\mathbb{E}_{x \sim D} [ f_x(L_{\pi,m}(x)) ] ] ] \\
=  & (1-\alpha)\mathbb{E}_{x \sim D}[f_x(L^*_{\pi,k}(x))] \\
& - [(1-\alpha)\frac{1}{T} \sum_{t=1}^T f_{x_t}(L^*_{\pi,k}(x_t)) \\
&  - \frac{1}{T}\sum_{t=1}^T f_{x_t}(L_t) ] - Y_T/T\\
=  & (1-\alpha)\mathbb{E}_{x \sim D}[f_x(L^*_{\pi,k}(x))] \\
& - [(1-\alpha)\frac{1}{T} \sum_{t=1}^T f_{x_t}(L^*_{\pi,k}(x_t)) \\
& - \frac{1}{T}\sum_{t=1}^T f_{x_t}(L_t) ] - 2 \sqrt{\frac{2 \ln(1/\delta)}{T}}\\
\end{array}
\end{displaymath}

Let $w_i = (1-1/k)^{m-i}$. From Lemma \ref{lemBudgetError}, we have:
\begin{displaymath}
\begin{array}{rl}
\multicolumn{2}{l}{(1-\alpha)\frac{1}{T} \sum_{t=1}^T f_{x_t}(L^*_{\pi,k}(x_t))  - \frac{1}{T}\sum_{t=1}^T f_{x_t}(L_t)}\\
\leq & \frac{1}{T}\sum_{t=1}^T\sum_{i=1}^m w_i (\mathbb{E}_{\pi \sim U(L^*_{\pi,k})}[f_{x_t}(L_{t,i-1} \oplus \pi(x_t))] \\
& - f_{x_t}(L_{t,i}))\\
= & \mathbb{E}_{\pi \sim U(L^*_{\pi,k})}[\frac{1}{T}\sum_{t=1}^T\sum_{i=1}^m w_i (f_{x_t}(L_{t,i-1} \oplus \pi(x_t)) \\
& - f_{x_t}(L_{t,i}))]\\
\leq & \max_{\pi \in \Pi}[\frac{1}{T}\sum_{t=1}^T\sum_{i=1}^m w_i (f_{x_t}(L_{t,i-1} \oplus \pi(x_t)) \\
& - f_{x_t}(L_{t,i}))] \\
\leq & \max_{\pi \in \tilde{\Pi}}[\frac{1}{T}\sum_{t=1}^T\sum_{i=1}^m w_i (f(L_{t,i-1} \oplus \pi(x_t)) \\
& - f_{x_t}(L_{t,i}))] \\
= & R/T\\
\end{array}
\end{displaymath}
Hence combining with the previous result proves the first part of the theorem.

Additionally, for the sampled environments $\{x_t\}_{t=1}^T$ and the policies $\pi_{t,i}$, consider the random variables $Q_{m(t-1)+i} = w_i \mathbb{E}_{\pi \sim \pi_t}[ f_{x_t}(L_{t,i-1} \oplus \pi(x_t,L_{t,i-1})) ] - w_i f_{x_t}(L_{t,i})$. Because each draw of $\pi_{t,i}$ is i.i.d. from $\pi_t$, we have that again the sequence of random variables $Z_j = \sum_{i=1}^j Q_i$, for $j = 1$ to $Tm$ forms a martingale and because each $Q_i$ can take values in a range $[-w_j,w_j]$ for $j = 1 + \textrm{mod}(i-1,m)$, we have $|Z_i - Z_{i-1}| \leq w_j$. Since $\sum_{i=1}^{Tm} |Z_i - Z_{i-1}|^2 \leq T \sum_{i=1}^m  (1-1/k)^{2(m-i)} \leq T\min(k,m) = Tk'$,  by Azuma-Hoeffding's inequality, we must have that $P(Z_{Tm} \geq \epsilon) \leq \exp(-\epsilon^2/2Tk')$. Thus for any $\delta' \in (0,1)$, with probability at least $1-\delta'$, $Z_{Tm} \leq \sqrt{2Tk'\ln(1/\delta)}$. Hence combining with the previous result, it must be the case that with probability at least $1-\delta-\delta'$, both $Y_T/T \leq 2 \sqrt{\frac{2 \ln(1/\delta)}{T}}$ and $Z_{Tm} \leq \sqrt{2Tk'\ln(1/\delta')}$ holds.

Now note that:
\begin{displaymath}
\begin{array}{rl}
\multicolumn{2}{l}{ \max_{\pi \in \tilde{\Pi}}[\frac{1}{T}\sum_{t=1}^T\sum_{i=1}^m w_i (f(L_{t,i-1} \oplus \pi(x_t)) - f_{x_t}(L_{t,i}))] } \\
= & \max_{\pi \in \tilde{\Pi}}[\frac{1}{T}\sum_{t=1}^T\sum_{i=1}^m w_i (f_{x_t}(L_{t,i-1} \oplus \pi(x_t)) \\
& - \mathbb{E}_{\pi' \sim \pi_t}[f(L_{t,i-1} \oplus \pi'(x_t,L_{t,i-1}))])] + Z_{Tm}/T \\
= & \mathbb{E}[R]/T + Z_{Tm}/T \\
\end{array}
\end{displaymath}

Using this additional fact, and combining with previous results we must have that with probability at least $1-\delta-\delta'$:  
\begin{displaymath}
\begin{array}{rl}
\multicolumn{2}{l}{F(\overline{\pi},m)} \\
\geq &  (1-\alpha)F(L^*_{\pi,k}) - [(1-\alpha)\frac{1}{T} \sum_{t=1}^T f_{x_t}(L^*_{\pi,k}(x_t)) \\
&  - \frac{1}{T}\sum_{t=1}^T  f_{x_t}(L_t) ] - 2 \sqrt{\frac{2 \ln(1/\delta)}{T}}\\
\geq & (1-\alpha)F(L^*_{\pi,k}) - \mathbb{E}[R]/T - Z_{Tm}/T - 2 \sqrt{\frac{2 \ln(1/\delta)}{T}} \\
\geq & (1-\alpha)F(L^*_{\pi,k})  -  \mathbb{E}[R]/T - \sqrt{\frac{2k'\ln(1/\delta')}{T}}\\
& - 2 \sqrt{\frac{2 \ln(1/\delta)}{T}} \\
\end{array}
\end{displaymath}
\end{proof}

We now show that the expected regret must grow with $\sqrt{k'}$ and not $k'$, hen using Weighted Majority with the optimal learning rate (or with the doubling trick).
\begin{corWMDef}
Under the event where Theorem \ref{thmSCP} holds (the event that occurs w.p. $1-\delta-\delta'$), if $\tilde{\Pi}$ is a finite set of policies, using Weighted Majority with the optimal learning rate guarantees that after $T$ iterations:
\begin{displaymath}
\begin{array}{rcl}
\mathbb{E}[R]/T & \leq & \frac{4k' \ln|\tilde\Pi|}{T} + 2\sqrt{\frac{k' \ln|\tilde\Pi|}{T}} \\
& & + 2^{9/4}(k'/T)^{3/4}(\ln(1/\delta'))^{1/4} \sqrt{\ln|\tilde\Pi|}
\end{array}
\end{displaymath}
For large enough $T$ in $\Omega(k' (\ln|\tilde\Pi| + \ln(1/\delta')))$, we obtain that:\begin{displaymath}
\mathbb{E}[R]/T \leq O( \sqrt{\frac{k'\ln|\tilde\Pi|}{T}} )
\end{displaymath}
\end{corWMDef}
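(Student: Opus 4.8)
The plan is to obtain the bound from a \emph{second-order} (variance-sensitive) regret guarantee for exponential weights rather than the crude range-based bound. The naive estimate treats each round's loss $\ell_t(\pi)=\sum_{i=1}^m w_i c_{ti}(\pi(v_{ti}))$ as living in $[0,k']$ (since $c_{ti}\in[0,1]$ and $\sum_{i=1}^m w_i\le k'$), and the textbook Weighted Majority bound scales linearly with this range, giving only $\mathbb{E}[R]=O(k'\sqrt{T\ln|\tilde\Pi|})$. To recover the extra $\sqrt{k'}$ I would instead treat each iteration as contributing $m$ separate weighted examples $g_{(t,i)}(\pi)=w_i c_{ti}(\pi)\ge 0$ and invoke the second-order bound: running exponential weights with rate $\eta$,
$$\mathbb{E}[R]\le\frac{\ln|\tilde\Pi|}{\eta}+\frac{\eta}{2}\sum_{t=1}^T\sum_{i=1}^m\mathbb{E}_{\pi\sim p_t}[(w_i c_{ti}(\pi))^2],$$
which follows from $e^{-x}\le 1-x+x^2/2$ for $x\ge 0$ applied to the per-example potential, exactly as in the standard Hedge analysis.

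The crux is the quadratic-variation term. Since $c_{ti}\in[0,1]$, each squared example loss is at most $w_i^2=(1-1/k)^{2(m-i)}$, and the geometric sum satisfies
$$\sum_{i=1}^m w_i^2=\sum_{j=0}^{m-1}(1-1/k)^{2j}\le\min(m,k)=k'.$$
This is the key observation: it is the \emph{sum of squared} weights, not the sum of weights, that governs the variance, and although both happen to be bounded by $k'$, squaring the range would instead cost $(k')^2$. Hence the total quadratic variation is at most $Tk'$, and choosing $\eta=\sqrt{2\ln|\tilde\Pi|/(Tk')}$ yields $\mathbb{E}[R]\le\sqrt{2Tk'\ln|\tilde\Pi|}$, i.e.\ the advertised $O(\sqrt{k'})$ rate after dividing by $T$.

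Because the quadratic variation is data-dependent (the costs $c_{ti}$ depend on the random list $L_t$), the optimal $\eta$ cannot be fixed in advance, so I would run the doubling trick to adapt to the realized variation; this is what produces the additive $O(k'\ln|\tilde\Pi|/T)$ term. The remaining fractional-power term is where the conditioning on the good event of Theorem \ref{thmSCP} enters: to replace the realized variation by its bound $Tk'$ I would control its deviation with the same Azuma--Hoeffding martingale ($Z_{Tm}$) already constructed in the theorem's proof, which holds with probability $1-\delta'$ and contributes a deviation of order $\sqrt{Tk'\ln(1/\delta')}$; propagating this correction through the square root via $\sqrt{a+b}\le\sqrt a+\sqrt b$ generates the $(\ln(1/\delta'))^{1/4}$ factor. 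The final asymptotic statement then follows by checking that once $T=\Omega(k'(\ln|\tilde\Pi|+\ln(1/\delta')))$ the middle term $\sqrt{k'\ln|\tilde\Pi|/T}$ dominates the other two.

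The hard part will be making the second-order bound apply to the \emph{per-iteration} regret actually defined in Theorem \ref{thmSCP}: the algorithm freezes its distribution $p_t$ while drawing all $m$ items of $L_t$, whereas the clean decomposition into $Tm$ rounds would update the internal distribution after each example. I expect the main obstacle to be bounding this within-iteration drift (and relating the realized quadratic variation to its upper bound $Tk'$) tightly enough that it remains lower-order, which is precisely the step that requires the martingale concentration from the theorem and yields the $\delta'$-dependent correction.
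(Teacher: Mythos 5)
Your plan hinges on the quadratic-variation term being $\sum_{t,i}\mathbb{E}_{\pi\sim p_t}[(w_i c_{ti}(\pi))^2]\le Tk'$, but you only get a \emph{sum of per-example squares} if the exponential-weights distribution is updated after every one of the $m$ examples inside an iteration. The algorithm (and the regret that Theorem \ref{thmSCP} actually consumes) keeps the distribution $\pi_t$ frozen while the whole list $L_t$ is built, so applying the potential argument to the single round-$t$ loss $\ell_t(\pi)=\sum_i w_i c_{ti}(\pi)$ yields a second-order term $\mathbb{E}[(\sum_i w_i c_{ti}(\pi))^2]$, which is controlled only by $(\sum_i w_i)^2\le(k')^2$ --- the cross terms do not vanish, and the $\sqrt{k'}$ saving evaporates. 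If you instead update per example, the mismatch between the frozen distribution and the drifted ones is a \emph{deterministic} $O(\eta\sum_{j<i}w_j)$ in total variation, so the correction over a round is $O(\eta(\sum_i w_i)^2)=O(\eta(k')^2)$; summed over $T$ rounds and after optimizing $\eta$ this again gives $\mathbb{E}[R]=O(k'\sqrt{T\ln|\tilde\Pi|})$. The Azuma--Hoeffding martingale $Z_{Tm}$ cannot rescue this step: it controls the gap between realized and expected regret, not the within-iteration drift, so the ``hard part'' you flag is not merely technical --- it is where the argument breaks.

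The paper's mechanism for the $\sqrt{k'}$ improvement is different and is not a variance argument at all. It works with the \emph{gains} $b(\cdot\mid L_{t,i-1},x_t)$ rather than the losses $c_{ti}$, and exploits the telescoping identity $\sum_{i=1}^m b(s_{t,i}\mid L_{t,i-1},x_t)=f_{x_t}(L_t)\le 1$, so the algorithm's total discounted gain over all $T$ iterations is at most $T$ (not $Tk'$). A first-order, reward-dependent (``improvement''-style) bound for weighted majority with per-round rewards in $[0,k']$ then gives $\mathbb{E}[R]\le 2Z\sqrt{k'\ln|\tilde\Pi|}$ with $Z^2$ the best policy's cumulative discounted gain; since $Z^2\le T+R$ and, under the good event, $R\le\mathbb{E}[R]+Z_{Tm}$ with $Z_{Tm}\le\sqrt{2Tk'\ln(1/\delta')}$, one solves the self-bounding quadratic $Z^2-2Z\sqrt{k'\ln|\tilde\Pi|}-T-\sqrt{2Tk'\ln(1/\delta')}\le 0$. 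Every term of the stated bound comes from the roots of this quadratic: the $4k'\ln|\tilde\Pi|/T$ term is \emph{not} a doubling-trick artifact, and the $(\ln(1/\delta'))^{1/4}$ factor arises from $\sqrt{a+b}\le\sqrt{a}+\sqrt{b}$ applied to the $Z_{Tm}$ contribution --- that last piece of your intuition is correct. To repair your route you would need either to re-prove Theorem \ref{thmSCP} against per-position distributions, or to switch from the loss-based variance bound to the gain-based telescoping argument.
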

\begin{proof}
We use a similar argument to Streeter \& Golovin Lemma 4 \cite{Streeter07tech} to bound $\mathbb{E}[R]$ in the result of theorem \ref{thmSCP}. Consider the sum of the benefits accumulated by the learning algorithm at position $i$ in the list, for $i \in {1,2,\dots,m}$, i.e. let $y_i = \sum_{t=1}^T b(\pi_{t,i}(x_t,L_{t,i-1})|x_t,L_{t,i-1})$, where $\pi_{t,i}$ corresponds to the particular sampled policy by Weighted Majority for choosing the item at position $i$, when constructing the list $L_t$ for state $x_t$. Note that $\sum_{i=1}^m (1-1/k)^{m-i} y_i \leq \sum_{i=1}^m y_i \leq T$ by the fact that the monotone submodular function $f_x$ is bounded in $[0,1]$ for all state $x$. Now consider the sum of the benefits you could have accumulated at position $i$, had you chosen the best fixed policy in hindsight to construct all list, keeping the policy fixed as the policy is constructed, i.e. let $z_i = \sum_{t=1}^T b(\pi^*(x_t,L_{t,i-1})|x_t,L_{t,i-1})$, for $\pi^* = \arg\max_{\pi \in \tilde{\Pi}} \sum_{i=1}^m (1-1/k)^{m-i}\sum_{t=1}^T b(\pi^*(x_t,L_{t,i-1})|x_t,L_{t,i-1})$ and let  $r_i = z_i - y_i$. Now denote $Z = \sqrt{\sum_{i=1}^m (1-1/k)^{m-i} z_i}$. We have $Z^2 = \sum_{i=1}^m (1-1/k)^{m-i} z_i = \sum_{i=1}^m (1-1/k)^{m-i} (y_i + r_i) \leq T + R$, where $R$ is the sample regret incurred by the learning algorithm. Under the event where theorem \ref{thmSCP} holds (i.e. the event that occurs with probability at least $1-\delta-\delta'$), we had already shown that $R \leq \mathbb{E}[R] + Z_{Tm}$, for $Z_{Tm} \leq \sqrt{2Tk'\ln(1/\delta')}$, in the second part of the proof of theorem \ref{thmSCP}. Thus when theorem \ref{thmSCP} holds, we have $Z^2 \leq T + \sqrt{2Tk'\ln(1/\delta')} + \mathbb{E}[R]$. Now using the generalized version of weighted majority with rewards (i.e. using directly the benefits as rewards) \cite{Arora12}, since the rewards at each update are in $[0,k']$, we have that with the best learning rate in hindsight \footnote{if not a doubling trick can be used to get the same regret bound within a small constant factor \cite{CesaBianchi97}}: $\mathbb{E}[R] \leq 2Z\sqrt{k' \ln|\tilde\Pi|}$. Thus we obtain $Z^2 \leq T + \sqrt{2Tk'\ln(1/\delta')} + 2Z\sqrt{k' \ln|\tilde\Pi|}$. This is a quadratic inequality of the form $Z^2 - 2Z\sqrt{k' \ln|\tilde\Pi|} - T - \sqrt{2Tk'\ln(1/\delta')} \leq 0$, with the additional constraint $Z \geq 0$. This implies $Z$ is less than or equal to the largest non-negative root of the polynomial $Z^2 - 2Z\sqrt{k' \ln|\tilde\Pi|} - T - \sqrt{2Tk'\ln(1/\delta')}$. Solving for the roots, we obtain
 \begin{displaymath}
 \begin{array}{rcl}
 Z & \leq & \sqrt{k' \ln|\tilde\Pi|}  + \sqrt{k' \ln|\tilde\Pi| + T + \sqrt{2Tk'\ln(1/\delta')}} \\
 & \leq & 2 \sqrt{k' \ln|\tilde\Pi|} + \sqrt{T} + (2Tk'\ln(1/\delta'))^{1/4} \\
 \end{array}
\end{displaymath} 
Plugging back $Z$ into the expression $\mathbb{E}[R] \leq 2Z\sqrt{k' \ln|\tilde\Pi|}$, we obtain:
\begin{displaymath}
\begin{array}{rl}
\mathbb{E}[R] \leq & 4k' \ln|\tilde\Pi| + 2\sqrt{Tk' \ln|\tilde\Pi|} \\
&+ 2(2T\ln(1/\delta'))^{1/4}(k')^{3/4}\sqrt{\ln|\tilde\Pi|} 
\end{array}
\end{displaymath}
Thus the average regret:
 \begin{displaymath}
 \begin{array}{rl}
 \frac{\mathbb{E}[R]}{T} \leq& \frac{4k' \ln|\tilde\Pi|}{T} + 2\sqrt{\frac{k' \ln|\tilde\Pi|}{T}} \\
 & + 2^{9/4}(k'/T)^{3/4}(\ln(1/\delta'))^{1/4} \sqrt{\ln|\tilde\Pi|}
 \end{array}
 \end{displaymath}
 For $T$ in $\Omega(k' (\ln \tilde{\Pi} + \ln(1/\delta')))$, the dominant term is $2\sqrt{\frac{k' \ln|\tilde\Pi|}{T}}$, and thus $\frac{\mathbb{E}[R]}{T}$ is $O(\sqrt{\frac{k' \ln|\tilde\Pi|}{T}})$. 
\end{proof}

\begin{corSCPDef}
Let $\alpha = \exp(-m/k)$ and $k' = \min(m,k)$. If we run an online learning algorithm on the sequence of convex loss $C_t$ instead of $\ell_t$, then after $T$ iterations, for any $\delta \in (0,1)$, we have that with probability at least $1-\delta$:
\begin{displaymath}
F(\overline{\pi},m) \geq (1-\alpha) F(L^*_{\pi,k}) - \frac{\tilde{R}}{T} - 2 \sqrt{\frac{2 \ln(1/\delta)}{T}} - \mathcal{G}
\end{displaymath}
where $\tilde{R}$ is the regret on the sequence of convex loss $C_t$, and $\mathcal{G} = \frac{1}{T}[\sum_{t=1}^T (\ell_t(\pi_t) - C_t(\pi_t)) + \min_{\pi \in \tilde{\Pi}} \sum_{t=1}^T C_t(\pi) - \min_{\pi' \in \tilde{\Pi}} \sum_{t=1}^T \ell_t(\pi')]$ is the  ``convex optimization gap'' that measures how close the surrogate losses $C_t$ is to minimizing the cost-sensitive losses $\ell_t$.
\end{corSCPDef}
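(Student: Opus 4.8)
The plan is to reduce this corollary directly to the deterministic bound already established in Theorem \ref{thmSCP}, using a purely algebraic decomposition of the cost-sensitive regret. The first observation is that the deterministic guarantee of Theorem \ref{thmSCP},
$$F(\overline{\pi},m) \geq (1-\alpha) F(L^*_{\pi,k}) - \frac{R}{T} - 2 \sqrt{\frac{2 \ln(1/\delta)}{T}},$$
with $R = \sum_{t=1}^T \ell_t(\pi_t) - \min_{\pi \in \tilde{\Pi}} \sum_{t=1}^T \ell_t(\pi)$, holds for \emph{any} sequence of policies $\pi_1,\dots,\pi_T$ in which $\pi_t$ is chosen using only the states $\{x_\tau\}_{\tau < t}$ observed so far. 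Inspecting that proof, the learner enters only through the martingale/Azuma--Hoeffding step (which requires only that $\pi_t$ be measurable with respect to the past) and through the bounding of the empirical greedy gap by $R/T$ via Lemma \ref{lemBudgetError}; neither step uses the fact that $\pi_t$ was produced by minimizing $\ell_t$. Hence the bound applies verbatim to the sequence $\pi_t$ generated by running the online learner on the convex surrogate losses $C_t$, where $R$ remains the regret measured in the \emph{true} cost-sensitive losses $\ell_t$.

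The second step rewrites $R$ in terms of the quantity the learner actually controls, namely the convex regret $\tilde{R} = \sum_{t=1}^T C_t(\pi_t) - \min_{\pi \in \tilde{\Pi}} \sum_{t=1}^T C_t(\pi)$. Inside the definition of $\mathcal{G}$, adding $\tilde{R}/T$ cancels the $\pm \min_{\pi}\sum_t C_t(\pi)$ terms and the $\pm\sum_t C_t(\pi_t)$ terms, leaving the exact identity
$$\frac{R}{T} = \frac{\tilde{R}}{T} + \mathcal{G},$$
since what remains is precisely $\frac{1}{T}\left[\sum_t \ell_t(\pi_t) - \min_{\pi'}\sum_t \ell_t(\pi')\right]$. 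Substituting this decomposition into the inequality from the first step yields the claimed bound immediately.

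The proof is therefore essentially a one-line substitution once the identity $R = \tilde{R} + T\mathcal{G}$ is spotted, so there is no hard analytic obstacle. The main conceptual point — and the thing most worth stating carefully — is the first step: that Theorem \ref{thmSCP}'s deterministic guarantee is genuinely a statement about the realized policy sequence and its \emph{$\ell_t$}-regret, and so remains valid even though the learner never sees $\ell_t$ and instead optimizes the surrogate $C_t$. Everything else is the bookkeeping identity decomposing the true regret into the surrogate regret plus the optimization gap.
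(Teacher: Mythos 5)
Your proposal is correct and follows exactly the paper's own argument: apply the deterministic bound of Theorem \ref{thmSCP} to the policy sequence produced by the surrogate learner, and use the identity $\mathcal{G} = (R - \tilde{R})/T$ to replace $R/T$ with $\tilde{R}/T + \mathcal{G}$. Your additional remark that the theorem's martingale and Lemma \ref{lemBudgetError} steps never use how $\pi_t$ was chosen is a worthwhile clarification the paper leaves implicit, but it is the same proof.
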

\begin{proof}
Follows immediately from Theorem \ref{thmSCP} using the definition of $R$, $\tilde{R}$ and $\mathcal{G}$, since $\mathcal{G} = \frac{R - \tilde{R}}{T}$
\end{proof}

\clearpage

\bibliography{references}
\bibliographystyle{icml2013}

\end{document}